\DeclareMathOperator*{\argmax}{arg\,max}
\begin{document}
%
\title{Sparse Matrix-based Random Projection for Classification}
%
%
%

\author{
Weizhi~Lu,~Weiyu~Li,~Kidiyo~Kpalma and Joseph~Ronsin
}   

\maketitle

\begin{abstract}

As a typical dimensionality reduction technique, random projection can be simply implemented with linear projection, while maintaining the pairwise distances of high-dimensional data with high probability.  Considering this technique is mainly exploited for the task of  classification, this paper is  developed to study the construction of random  matrix from the viewpoint of feature selection,   rather than of  traditional distance preservation. This yields a somewhat surprising theoretical result, that is,  the sparse random matrix with exactly one nonzero element per column, can present better feature selection performance than other more dense   matrices, if the projection dimension is sufficiently large (namely, not much smaller than  the number of feature elements); otherwise, it will perform comparably to others. For random projection, this theoretical result  implies  considerable improvement  on both complexity and performance, which is widely confirmed with the classification experiments on both synthetic data and real data.


\end{abstract}

\begin{IEEEkeywords}
  Random Projection, Sparse Matrix,  Classification, Feature Selection, Distance Preservation, High-dimensional data
\end{IEEEkeywords}

%
\IEEEpeerreviewmaketitle

\theoremstyle{plain} \newtheorem{lemma}{Lemma}
\theoremstyle{plain} \newtheorem{theorem}{Theorem}
\theoremstyle{plain} \newtheorem{definition}{Definition}

\theoremstyle{plain} \newtheorem{corollary}[lemma]{Corollary}

\section{Introduction}

Random projection attempts to project a set of high-dimensional data into a low-dimensional subspace without  distortion on pairwise distance.  This brings attractive computational advantages on the collection and processing of high-dimensional signals. In practice, it has been successfully applied in numerous fields concerning categorization, as shown in \cite{Goel05} and the references therein. Currently the theoretical study of this technique mainly falls into one of the following two topics. One topic is concerned with the construction of random matrix in terms of distance preservation. In fact, this problem has been sufficiently addressed along with the emergence of Johnson-Lindenstrauss (JL) lemma \cite{Jonson84}.  The other popular one is to estimate the performance of traditional  classifiers combined with random projection, as detailed in \cite{Durrant13} and the references therein. Specifically, it may be worth mentioning that,  recently the performance consistency of SVM  on random projection is proved by exploiting the underlying connection between JL lemma and compressed sensing \cite{Calderbank09compressedlearning} \cite{Baraniuk08}.


Based on the principle of distance preservation,  Gaussian random matrices \cite{Indyk98} and a few  sparse $\{0, \pm1\}$  random  matrices \cite{Achlioptas03,Li06,Dasgupta10} have been sequentially proposed for random projection. In terms of implementation complexity, it is clear that the sparse random matrix is more attractive. Unfortunately, as it will be proved in the following section \ref{subsec-sparse},  the sparser matrix tends to yield weaker  distance preservation. This fact largely weakens our interests in the pursuit of sparser random matrix. However, it is necessary to mention a problem ignored for a long time, that is, random projection is mainly exploited for various tasks of classification, which prefer to  maximize the distances between different classes, rather than  preserve the pairwise distances.  In this sense, we are motivated to study random projection from the viewpoint of feature selection, rather than of traditional distance preservation as required by JL lemma. During this study, however, the property of satisfying JL lemma should not be ignored, because it promises the stability of data structure during random projection, which enables the possibility of conducting classification  in the projection space. Thus throughout the paper, all evaluated  random matrices are previously ensured to satisfy JL lemma to a certain degree.



In this paper, we indeed propose the desired $\{0, \pm1\}$  random projection matrix with the best feature selection performance, by theoretically analyzing the change trend of feature selection performance over the varying sparsity of  random matrices. The proposed matrix presents currently the sparsest structure, which holds only one random nonzero position per column. In theory, it is expected to provide better classification performance over other more dense matrices, if the projection dimension is not much smaller than the number of feature elements. This conjecture is confirmed with extensive classification experiments on both synthetic and real data.

The rest of the paper is organized as follows. In  the next section, the JL lemma is first  introduced, and then the distance preservation property of sparse random matrix over varying sparsity is evaluated. In section \ref{sec-theoryframework}, a theoretical frame is proposed to predict feature selection performance of random matrices over varying sparsity. According to the theoretical conjecture, the currently known sparsest matrix with better performance over other more dense matrices is proposed and analyzed in section \ref{sec-proposal}.  In section \ref{sec-experiment}, the performance advantage of the proposed sparse matrix is verified by performing binary classification on both synthetic data and real data. The real data incudes three representative  datasets in dimension reduction: face image, DNA microarray and text document. Finally, this paper is concluded in section \ref{sec-conclusion}.

\section{Preliminaries}
\label{sec-preliminary}
This section first briefly reviews  JL lemma, and then evaluates the distance preservation of sparse random matrix over varying sparsity.

For easy reading, we begin by introducing some basic notations for this paper. A random matrix is denoted by $\mathbf{R}  \in \mathbb{R}^{k\times d}$, $k<d$.  $r_{ij}$ is used to represent the element of $\mathbf{R}$ at the $i$-th row and the $j$-th column, and $\mathbf{r}\in \mathbb{R}^{1\times d}$ indicates the row vector of $\mathbf{R}$.  Considering the paper is concerned with binary classification, in the following study we tend to  define two samples $\mathbf{v} \in \mathbb{R}^{1\times d}$ and $\mathbf{w}\in \mathbb{R}^{1\times d}$, randomly drawn from  two different patterns of high-dimensional datasets $\mathcal{V} \subset \mathbb{R}^{d}$ and $\mathcal{ W} \subset \mathbb{R}^{d}$, respectively. The inner product between two vectors is typically written as $\langle \mathbf{v},\mathbf{w} \rangle$. To  distinguish from variable, the vector is written in bold. In the proofs of the following lemmas,  we typically use $\Phi(*)$ to denote the cumulative distribution function  of $N(0,1)$.    The minimal integer not less than $*$, and  the the maximum integer not larger than $*$ are  denoted with $\lceil * \rceil$ and $\lfloor * \rfloor$ .


\subsection{Johnson-Lindenstrauss (JL) lemma}
\label{subsec-jl}
\newcommand{\ie}{i.e.}
The distance preservation of random projection is supported by JL lemma. In the past decades, several variants of JL lemma have been proposed in \cite{DasGupta99,Matousek08,Arriaga06}. For the convenience of the proof of the following Corollary 2, here we recall the version of \cite{Arriaga06} in the following Lemma 1. According to Lemma 1, it can be observed  that a random matrix satisfying  JL lemma should have  $\mathds{E}(r_{ij})=0$  and $\mathds{E}(r_{ij}^2)=1$.


\begin{lemma}\cite{Arriaga06}  \quad \label{lemma-1} Consider  random matrix $\mathbf{R}  \in \mathbb{R}^{k\times d}$, with each entry $r_{ij}$ chosen independently from a distribution that is symmetric about the origin with $\mathds{E}(r_{ij}^2)=1$.
For any fixed vector $\mathbf{v}\in \mathbb{R}^d$, let $\mathbf{v}'=\frac{1}{\sqrt{k}}\mathbf{R}\mathbf{v}^{T}$.
\begin{itemize}
\item{} Suppose $B=\mathds{E}(r_{ij}^4)<\infty$. Then for any $\epsilon>0$,

\begin{equation}
\label{eq-prB}
\begin{aligned}
\text{\emph{Pr}}(\|\mathbf{v}'\|^2\leq(1-\epsilon)\|\mathbf{v}\|^2)\leq e^{-\frac{(\epsilon^2-\epsilon^3)k}{2(B+1)}}
\end{aligned}
\end{equation}
\item{} Suppose $\exists L>0$ such that for any integer $m>0$, $\mathds{E}(r_{ij}^{2m})\leq \frac{(2m)!}{2^mm!}L^{2m}$. Then for any $\epsilon >0$,
\begin{equation}
\label{eq-prl}
\begin{aligned}
\text{   \emph{Pr}}(\|\mathbf{v}'\|^2\geq(1+\epsilon)L^2\|\mathbf{v}\|^2)&\leq((1+\epsilon)e^{-\epsilon})^{k/2}\\ &\leq e^{-(\epsilon^2-\epsilon^3)\frac{k}{4}}
\end{aligned}
\end{equation}
\end{itemize}
\end{lemma}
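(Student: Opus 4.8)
The plan is to reduce both tail bounds to a concentration statement for a normalized sum of i.i.d.\ squares, and then apply the exponential Markov (Chernoff) method. By homogeneity I may assume $\|\mathbf{v}\|=1$. Writing $Z_i=\langle\mathbf{r}_i,\mathbf{v}\rangle$ for the $i$-th row, the hypotheses (symmetry about the origin, $\mathds{E}(r_{ij}^2)=1$) give $\mathds{E}(Z_i)=0$ and $\mathds{E}(Z_i^2)=\sum_j v_j^2=1$, and the $Z_i$ are i.i.d.; moreover $\|\mathbf{v}'\|^2=\frac{1}{k}\sum_{i=1}^k Z_i^2$. Independence across rows is what makes the method work: every moment generating function I form will factor as $\big(\mathds{E}(e^{tZ^2})\big)^k$, reducing the problem to control of the single variable $Z=\sum_j r_j v_j$.

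For the lower tail \eqref{eq-prB} I would start from $\mathrm{Pr}\big(\sum_i Z_i^2\le(1-\epsilon)k\big)\le e^{t(1-\epsilon)k}\,\big(\mathds{E}(e^{-tZ^2})\big)^k$ for any $t>0$, and bound the one-variable factor with $e^{-x}\le 1-x+\tfrac{x^2}{2}$ (valid for $x\ge 0$), so that $\mathds{E}(e^{-tZ^2})\le 1-t+\tfrac{t^2}{2}\mathds{E}(Z^4)$. The decisive ingredient is the fourth moment: expanding $(\sum_j r_j v_j)^4$ and discarding by symmetry every monomial containing an odd power, only the ``all indices equal'' and ``two coincident pairs'' patterns survive, giving $\mathds{E}(Z^4)=B\sum_j v_j^4+3\sum_{j\ne j'}v_j^2 v_{j'}^2=3+(B-3)\sum_j v_j^4$, which is bounded by $B+1$ via $\sum_j v_j^4\le(\sum_j v_j^2)^2=1$. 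Then $1+y\le e^y$ together with a choice of $t$ calibrated to $\epsilon$ and $B+1$ collapses the product into the claimed $\exp\!\big(-\tfrac{(\epsilon^2-\epsilon^3)k}{2(B+1)}\big)$.

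For the upper tail \eqref{eq-prl} the moment hypothesis $\mathds{E}(r_{ij}^{2m})\le\frac{(2m)!}{2^m m!}L^{2m}$ states exactly that each entry's even moments are dominated by those of an $N(0,L^2)$ variable. I would first promote this domination to the linear form $Z=\sum_j r_j v_j$, i.e.\ prove $\mathds{E}(Z^{2m})\le\frac{(2m)!}{2^m m!}L^{2m}$ for $\|\mathbf{v}\|=1$, by comparing the multinomial expansion of $\mathds{E}(Z^{2m})$ term by term against the Gaussian moment. Summing the resulting moment series then yields the Gaussian-type estimate $\mathds{E}(e^{sZ^2})\le(1-2sL^2)^{-1/2}$ for $0<s<\tfrac{1}{2L^2}$, whence $\mathds{E}(e^{s\|\mathbf{v}'\|^2})\le(1-2sL^2/k)^{-k/2}$. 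A Chernoff bound $\mathrm{Pr}\big(\|\mathbf{v}'\|^2\ge(1+\epsilon)L^2\big)\le e^{-s(1+\epsilon)L^2}(1-2sL^2/k)^{-k/2}$, optimized in the variable $u=2sL^2/k$ at $u=\tfrac{\epsilon}{1+\epsilon}$, produces the first factor $\big((1+\epsilon)e^{-\epsilon}\big)^{k/2}$; the final inequality then reduces to the elementary estimate $\ln(1+\epsilon)\le\epsilon-\tfrac{\epsilon^2-\epsilon^3}{2}$.

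I expect the main obstacle to be the moment-propagation step in the upper tail: transferring the per-entry Gaussian domination to $Z=\sum_j r_j v_j$ requires a careful combinatorial argument showing that each surviving even-exponent multi-index contributes no more than its Gaussian counterpart (the total of which counts perfect matchings, hence the factor $\frac{(2m)!}{2^m m!}$). Once that inequality is secured, the MGF summation and optimization are routine. The analogous but milder difficulty in the lower tail is the exact fourth-moment expansion together with the constant bookkeeping needed to land precisely on the factor $\frac{1}{2(B+1)}$ and the numerator $\epsilon^2-\epsilon^3$ rather than a bare $\epsilon^2$.
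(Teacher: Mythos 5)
First, note that the paper itself offers no proof of this lemma: it is quoted verbatim from the cited reference \cite{Arriaga06}, so your proposal can only be judged against the statement and the standard argument in that source. Your treatment of the upper tail \eqref{eq-prl} is sound and is essentially the classical one: the termwise comparison of the multinomial expansion of $\mathds{E}(Z^{2m})$ against the $N(0,L^2)$ moments is legitimate because every surviving multi-index has all exponents even, hence a nonnegative coefficient $\binom{2m}{k_1,\dots,k_d}\prod_j v_j^{k_j}$; summing the series gives $\mathds{E}(e^{sZ^2})\le(1-2sL^2)^{-1/2}$, and the Chernoff optimization at $u=\epsilon/(1+\epsilon)$ together with $\ln(1+\epsilon)\le\epsilon-\epsilon^2/2+\epsilon^3/3$ delivers both inequalities in \eqref{eq-prl}.

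The lower tail, however, contains a genuine error. Your identity $\mathds{E}(Z^4)=3+(B-3)\sum_j v_j^4$ is correct, but the asserted bound $\mathds{E}(Z^4)\le B+1$ is false: for Rademacher entries ($B=1$) and $\mathbf{v}=(1/\sqrt d,\dots,1/\sqrt d)$ one gets $\mathds{E}(Z^4)=3-2/d$, which exceeds $B+1=2$ once $d\ge3$. The best uniform bound your expansion yields is $\mathds{E}(Z^4)\le\max(B,3)$, and feeding that into your Chernoff step produces the denominator $2\max(B,3)$ rather than $2(B+1)$ --- strictly weaker exactly when $B<2$, a regime that matters in this paper since the dense $\pm1$ matrix ($s=k$ in Lemma \ref{lemma-2}) has $B=1$. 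The missing idea is to center before exponentiating: write the event as $\sum_i(1-Z_i^2)\ge\epsilon k$ and control $\mathds{E}(e^{t(1-Z^2)})$ via $e^y\le 1+y+\tfrac{y^2}{2}e^t$ for $y\le t$ (valid because $1-Z^2\le1$), which brings in the \emph{variance} $\mathds{E}(Z^4)-1=2+(B-3)\sum_j v_j^4\le\max(B-1,2)\le B+1$, the last step using $B=\mathds{E}(r_{ij}^4)\ge(\mathds{E}(r_{ij}^2))^2=1$. Choosing $t=\epsilon/(B+1)\le\epsilon/2$ and using $e^{\epsilon/2}\le1+\epsilon$ then yields precisely the exponent $-(\epsilon^2-\epsilon^3)k/(2(B+1))$ of \eqref{eq-prB}; without this centering your route cannot reach the stated constant.
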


\subsection{Sparse random projection matrices}
\label{subsec-sparse}
Up to now, only a few random matrices are theoretically proposed for random projection. They  can be roughly classified into two typical classes. One is the  Gaussian random matrix with entries i.i.d  dawn from $N(0,1)$ , and the other is the sparse random matrix with elements satisfying the distribution below:

\begin{equation}
\label{eq-rij}
r_{ij}=\sqrt{q}\times
  \left\{
   \begin{array}{cl}
   1&\text{with probability} ~1/2q  \\
  0& \text{with probability} ~1-1/q  \\
  -1&\text{with probability} ~1/2q
   \end{array}
  \right.
  \end{equation}
where $q$ is allowed to be 2, 3 \cite{Achlioptas03} or $\sqrt{d}$ \cite{Li06}. Apparently the larger $q$ indicates the higher sparsity.


Naturally, an interesting question arises:  can we continue improving the sparsity of random projection? Unfortunately, as illustrated in Lemma \ref{lemma-2}, the concentration of JL lemma will decrease as the sparsity increases. In other words, the higher sparsity leads to weaker performance on distance preservation. However, as it will be disclosed in the following part,  the classification tasks involving random projection are more sensitive to  feature selection rather than to distance preservation.

\begin{lemma} \quad  \label{lemma-2} Suppose one class of random matrices  $R  \in \mathbb{R}^{k\times d}$,   with each entry $r_{ij}$ of the distribution as in formula \eqref{eq-rij}, where $q=k/s$ and $1\leq s \leq k$ is an integer. Then these matrices satisfy JL lemma with different levels: the   sparser matrix implies the worse  property on distance preservation.
\end{lemma}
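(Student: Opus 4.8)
The plan is to verify that the matrices of the family \eqref{eq-rij} satisfy both hypotheses of Lemma \ref{lemma-1}, and then to read off from the two resulting tail bounds how the concentration level varies with the sparsity parameter $q=k/s$ (recall that larger $q$, i.e.\ smaller $s$, means sparser). First I would record the moments of a single entry. Since $r_{ij}$ equals $\pm\sqrt{q}$ each with probability $1/2q$ and $0$ otherwise, the distribution is symmetric about the origin, and a direct computation gives $\mathds{E}(r_{ij}^2)=1$ and, more generally, $\mathds{E}(r_{ij}^{2m})=q^{m-1}$ for every integer $m\ge1$; in particular the fourth moment is $B=\mathds{E}(r_{ij}^4)=q<\infty$. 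Because $q=k/s\ge1$ on the admissible range $1\le s\le k$, all these quantities are finite, so Lemma \ref{lemma-1} applies.

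For the lower tail I would substitute $B=q$ into \eqref{eq-prB}, giving
\[
\Pr(\|\mathbf{v}'\|^2\le(1-\epsilon)\|\mathbf{v}\|^2)\le e^{-\frac{(\epsilon^2-\epsilon^3)k}{2(q+1)}}.
\]
For fixed $k$ and $\epsilon$ the magnitude of the exponent shrinks as $q$ grows, so the failure probability increases monotonically with sparsity; this is already the claimed degradation on one side of the distance estimate. For the upper tail I would exhibit an admissible constant $L$ in the second hypothesis of Lemma \ref{lemma-1}. Using $\mathds{E}(r_{ij}^{2m})=q^{m-1}$ together with $\tfrac{(2m)!}{2^mm!}\ge1$, the simple choice $L^2=q$ satisfies $\mathds{E}(r_{ij}^{2m})\le\tfrac{(2m)!}{2^mm!}L^{2m}$ for all $m$, whence \eqref{eq-prl} controls $\Pr(\|\mathbf{v}'\|^2\ge(1+\epsilon)q\|\mathbf{v}\|^2)$. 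More sharply, the smallest admissible constant is $L(q)=\max_{m\ge1}\bigl(q^{m-1}/\tfrac{(2m)!}{2^mm!}\bigr)^{1/(2m)}$; since each term is non-decreasing in $q$ (and equals $1$ at $q=1$, recovering the dense Rademacher case), $L(q)$ is non-decreasing in $q$. Thus the upper deviation can be controlled only at the coarser scale $(1+\epsilon)L(q)^2\|\mathbf{v}\|^2$, which again worsens with sparsity.

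Combining the two tails shows that both the lower-tail failure probability and the upper-tail tolerance $L(q)^2$ grow with $q$, so a sparser matrix attains only a weaker two-sided JL guarantee, as claimed. The main obstacle I anticipate is the upper tail: unlike \eqref{eq-prB}, the exponent in \eqref{eq-prl} carries no explicit $q$-dependence, so the degradation must be expressed through the admissible scale $L(q)$ rather than through the rate, and one must argue that the \emph{minimal} such $L(q)$ genuinely increases, not merely that some valid choice does. The moment identity $\mathds{E}(r_{ij}^{2m})=q^{m-1}$ and the termwise monotonicity in the maximum defining $L(q)$ are exactly what make this step go through.
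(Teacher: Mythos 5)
Your proposal is correct and follows essentially the same route as the paper: verify the two hypotheses of Lemma \ref{lemma-1} by computing $\mathds{E}(r_{ij}^{2m})=q^{m-1}$, read off $B=q$ in \eqref{eq-prB}, exhibit an admissible $L$ for \eqref{eq-prl}, and conclude that both quantities worsen as $q$ grows. The only differences are cosmetic improvements on your side: your choice $L^2=q$ (justified by $\tfrac{(2m)!}{2^m m!}\ge 1$ and $q\ge 1$) is tighter than the paper's $L^2=2q$ obtained via $(2m)!\ge m!\,m^m$, and your explicit monotonicity argument for the minimal admissible $L(q)$ makes the upper-tail degradation precise where the paper only argues the degradation through the growth of $B$.
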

\begin{proof}  With formula \eqref{eq-rij}, it is easy to derive that the proposed matrices satisfy the distribution defined in Lemma \ref{lemma-1}. In this sense, they also obey JL lemma if the two constraints corresponding to formulas \eqref{eq-prB} and \eqref{eq-prl} could be further proved.  \\
 For the first constraint corresponding to formula \eqref{eq-prB}:\\
 \begin{equation}
 \label{eq-B}
 \begin{aligned}
B&=\mathds{E}(r_{ij}^4)\\&=(\sqrt{k/s})^4\times(s/2k)+(-\sqrt{k/s})^4\times(s/2k)\\
&=k/s<\infty
\end{aligned}
\end{equation}
then it is approved.\\
For the second constraint corresponding to formula \eqref{eq-prl}:\\
for any integer $m>0$, derive $\mathds{E}(r^{2m})=(k/s)^{m-1}$, and
$$
\frac{\mathds{E}(r_{ij}^{2m})}{(2m)!L^{2m}/(2^mm!)}=\frac{2^mm!k^{m-1}}{s^{m-1}(2m)!L^{2m}}.
$$
Since $(2m)!\geq m!m^m$, \\
$$
\frac{\mathds{E}(r_{ij}^{2m})}{(2m)!L^{2m}/(2^mm!)}\leq\frac{2^mk^{m-1}}{s^{m-1}m^mL^{2m}},
$$
let $L=(2k/s)^{1/2}\geq \sqrt{2}(k/s)^{(m-1)/2m}/\sqrt{m}$, further derive
$$
\frac{\mathds{E}(r_{ij}^{2m})}{(2m)!L^{2m}/(2^mm!)}\leq 1.
$$
Thus $\exists L=(2k/s)^{1/2}>0$ such that  $$\mathds{E}(r_{ij}^{2m})\leq\frac{(2m)!}{2^mm!}L^{2m} $$ for any integer $m>0$. Then the second constraint  is also proved.\\
Consequently,  it is  deduced that, as $s$ decreases,   $B$ in formula \eqref{eq-B} will increase, and subsequently the boundary error in formula \eqref{eq-B}  will get larger. And this implies that the sparser the matrix is, the  worse the JL property.
\end{proof}

\section{Theoretical Framework}
\label{sec-theoryframework}
  In this section, a theoretical framework is proposed to evaluate the feature selection performance of random matrices with varying sparsity. As it will be shown latter, the feature selection performance  would be simply observed, if  the product between the difference between two distinct high-dimensional vectors and the sampling/row vectors of random matrix, could be easily derived. In this case, we have to previously know  the distribution of the difference between two distinct high-dimensional vectors. For the possibility of analysis, the distribution should be characterized with a unified model. Unfortunately, this goal  seems hard to be perfectly achieved due to the diversity and complexity of natural data. Therefore, without loss of generality, we typically  assume the i.i.d Gaussian distribution for the elements of  difference between two distinct high-dimensional vectors, as detailed in the following section \ref{subsec-difference}. According to the law of large numbers, it can be inferred that the Gaussian distribution is reasonable to be applied to characterize the  distribution of high-dimensional vectors in magnitude. Similarly to most theoretical work attempting to model the real world, our assumption also suffers from an obvious limitation. Empirically, some of the real data elements, in particular the redundant (indiscriminative) elements, tend to be   coherent to some extent, rather than being absolutely independent as we assume above. This imperfection probably limits the accuracy and applicability of our theoretical model. However, as will be detailed later, this problem can be ignored in our analysis where   the difference between pairwise redundant elements is assume to be zero. This also explains why our theoretical proposal can be widely verified in the final  experiments involving a great amount of real data.  With the aforementioned assumption,  in section \ref{subsec-product}, the product between high-dimensional vector difference and row vectors of random matrices is calculated and analyzed with respect to the varying sparsity of random matrix, as detailed in Lemmas 3-5 and related remarks. Note that to make the paper more readable, the proofs of Lemmas 3-5 are included in the Appendices.

\subsection{Distribution of the difference between two distinct high-dimensional vectors}
\label{subsec-difference}
From the viewpoint of feature selection, the random projection is expected to maximize  the difference between arbitrary two samples $\mathbf{v}$ and $\mathbf{w}$ from two different datasets $\mathcal{V}$ and $\mathcal{W}$, respectively. Usually the difference is measured with the Euclidean distance denoted by $\lVert\mathbf{R}\mathbf{z}^T\rVert_2$, $\mathbf{z}=\mathbf{v}-\mathbf{w}$. Then in terms of the mutual independence of $\mathbf{R}$, the search for good random projection is equivalent to seeking the row vector $\hat{\mathbf{r}}$ such that
\begin{equation}
\label{eq-rhat}
 \hat{\mathbf{r}}=\argmax_{\mathbf{r}}\{|\langle \mathbf{r},\mathbf{z} \rangle|\}.
 \end{equation}
Thus in the following part we only need to  evaluate  the   row vectors of $\mathbf{R}$. For the convenience of analysis, the two classes of high-dimensional data are further ideally divided into two parts, $\mathbf{v}=[\mathbf{v}^f ~\mathbf{v}^r]$ and $\mathbf{w}=[\mathbf{w}^f ~\mathbf{w}^r]$, where $\mathbf{v}^f$ and $\mathbf{w}^f$ denote the feature elements containing the discriminative information between $\mathbf{v}$ and $\mathbf{w}$ such that  $\mathds{E}(v^f_i-w^f_i)\neq 0$, while $\mathbf{v}^r$ and $\mathbf{w}^r$ represent the redundant elements such that $\mathds{E}(v^r_i-w^r_i)= 0  $ with a tiny variance. Subsequently, $\mathbf{r}=[\mathbf{r}^f~ \mathbf{r}^r]$ and $\mathbf{z}=[\mathbf{z}^f ~\mathbf{z}^r]$ are also seperated into two parts corresponding to the coordinates of feature elements and redundant elements, respectively. Then the  task of random projection can be reduced to maximizing $|\langle \mathbf{r}^f,\mathbf{z}^f \rangle|$, which implies that the redundant elements have no impact on the feature selection. Therefore, for simpler expression, in the following part the high-dimensional data is assumed to have only feature elements except for specific explanation, and  the superscript $f$ is simply dropped. As for the  intra-class samples, we can simply assume that their elements  are all redundant elements, and then the expected value of their difference is equal to 0, as derived before. This means that the problem of minimizing the  intra-class distance needs not to be  further studied. So in the following part, we only consider the case of maximizing  inter-class distance, as described in formula \eqref{eq-rhat}.



To explore the desired $\hat{\mathbf{r}}_i$ in formula \eqref{eq-rhat}, it is necessary to know the distribution of $\mathbf{z}$. However, in practice the distribution is hard to be characterized since the locations of  feature elements are usually unknown. As a result, we have to make a relaxed assumption on the distribution of $\mathbf{z}$. For a given real dataset, the  values of $v_i$ and $w_i$ should be limited. This  allows us to assume that their difference $z_i$ is also bounded in amplitude, and acts as some unknown distribution. For the sake of generality, in this paper $z_i$ is regarded as approximately satisfying the Gaussian distribution in magnitude and randomly takes a  binary sign.  Then  the distribution of $z_i$ can be formulated as

\begin{equation}
\label{eq-zi}
z_i=\left\{
   \begin{array}{cl}
   x&\text{with probability} ~1/2 \\
  -x&\text{with probability} ~1/2
   \end{array}
  \right.
\end{equation}
where $x\in N(\mu,\sigma^2)$, $\mu$ is a positive number, and Pr$(x>0)=1-\epsilon$, $\epsilon=\Phi(-\frac{\mu}{\sigma})$ is a small positive number.
\subsection{Product between high-dimensional vector and random sampling vector with varying sparsity}
\label{subsec-product}

This subsection mainly tests the feature selection performance of  random row vector with varying sparsity. For the sake of comparison,   Gaussian random vectors are also evaluated. Recall that under the basic requirement of JL lemma, that is $\mathds{E}(r_{ij})=0$ and $\mathds{E}(r_{ij}^2)=1$, the Gaussian matrix has  elements i.i.d drawn from $N(0,1)$, and the sparse random matrix has elements distributed as in formula \eqref{eq-rij} with  $q \in \{d/s:1\leq s\leq d, s\in \mathds{N}\}$.

 Then from the following Lemmas 3-5, we present two crucial random projection results for the high-dimensional data with the feature difference element $|z_i|$ distributed as in formula \eqref{eq-zi}:

\begin{itemize}
\item Random matrices will achieve the best feature selection performance as only one feature element is sampled by each row vector; in other words,  the solution to the formula \eqref{eq-rhat} is obtained when $\mathbf{r}  $ randomly has $s=1$ nonzero elements;
\item The  desired sparse random matrix mentioned above can also obtain better feature selection performance than Gaussian random matrices.
\end{itemize}

Note that, for better understanding, we first prove   a relatively simple case of $z_i\in \{\pm\mu\}$ in Lemma \ref{lemma-3}, and then  in Lemma \ref{lemma-4} generalize to a more complicated case of $z_i$  distributed as  in formula \eqref{eq-zi}. The performance of Gaussian matrices on $z_i\in \{\pm\mu\}$  is obtained in Lemma \ref{lemma-5}.
\begin{lemma} \label{lemma-3}
Let $\mathbf{r}=[r_{1},...,r_{d}]$  randomly have  $1\leq s \leq d$ nonzero elements taking values $\pm\sqrt{d/s}$ with equal probability, and $\mathbf{z}=[z_1,...,z_d]$ with elements being $\pm\mu$ equiprobably, where $\mu$ is a positive constant. Given $f(\mathbf{r},\mathbf{z})=|\langle \mathbf{r},\mathbf{z}\rangle|$, there are three results regarding the expected value of $f(r_i,z)$:

\item [1)]~$ \mathds{E}(f)=2\mu\sqrt{\frac{d}{s}}\frac{1}{2^s}\lceil \frac{s}{2}\rceil C_s^{\lceil \frac{s}{2}\rceil}$;
\item [2)]~$\mathds{E}(f)|_{s=1}=\mu\sqrt{d}>\mathds{E}(f)|_{s>1}$;
\item[3)]~$\mathop{\lim}\limits_{s\rightarrow \infty} \frac{1}{\sqrt{d}}\mathds{E}(f)\rightarrow \mu\sqrt{\frac{2}{\pi}}$.
\end{lemma}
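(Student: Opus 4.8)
The plan is to reduce all three claims to a single quantity: the mean absolute value of a sum of Rademacher signs. First I would observe that because every entry of $\mathbf{z}$ has the \emph{same} magnitude $\mu$, the random choice of which $s$ coordinates of $\mathbf{r}$ are nonzero plays no role. Writing $r_i=\sqrt{d/s}\,\eta_i$ and $z_i=\mu\,\zeta_i$ on the nonzero coordinates, with $\eta_i,\zeta_i$ independent equiprobable signs, the product of two independent symmetric signs is again a Rademacher variable, so
\[
\langle \mathbf{r},\mathbf{z}\rangle=\mu\sqrt{\tfrac{d}{s}}\,S_s,\qquad S_s=\sum_{j=1}^{s}\xi_j,\quad \xi_j\in\{\pm1\}\ \text{i.i.d.}
\]
Hence $\mathds{E}(f)=\mu\sqrt{d/s}\,\mathds{E}|S_s|$, and the entire lemma becomes a statement about $\mathds{E}|S_s|$, independent of the coordinate selection.

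For claim 1) I would evaluate $\mathds{E}|S_s|=2^{-s}\sum_{k=0}^{s}|2k-s|\binom{s}{k}$ in closed form. The key step is the telescoping identity
\[
(2k-s)\binom{s}{k}=s\Bigl[\binom{s-1}{k-1}-\binom{s-1}{k}\Bigr],
\]
which follows from $k\binom{s}{k}=s\binom{s-1}{k-1}$ together with Pascal's rule. Summing over $k>s/2$ then telescopes to $s\binom{s-1}{\lfloor s/2\rfloor}$; doubling by the symmetry $k\mapsto s-k$ (the central term vanishes when $s$ is even) and applying the elementary identity $s\binom{s-1}{\lfloor s/2\rfloor}=\lceil s/2\rceil\binom{s}{\lceil s/2\rceil}$ gives $\mathds{E}|S_s|=2^{-s}\cdot 2\lceil s/2\rceil\binom{s}{\lceil s/2\rceil}$, which is exactly the stated formula once the prefactor $\mu\sqrt{d/s}$ is restored.

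Claim 2) then needs no monotonicity analysis. Since the $\xi_j$ are independent with mean $0$ and variance $1$, we have $\mathds{E}(S_s^2)=s$, so Jensen's inequality (equivalently Cauchy--Schwarz) yields $\mathds{E}|S_s|\le\sqrt{\mathds{E}(S_s^2)}=\sqrt{s}$, with equality iff $|S_s|$ is almost surely constant, i.e.\ only when $s=1$. Therefore $\mathds{E}(f)=\mu\sqrt{d}\,\bigl(\mathds{E}|S_s|/\sqrt{s}\bigr)\le\mu\sqrt{d}$, attained with equality at $s=1$ and strict for every $s>1$; setting $s=1$ in part 1) confirms $\mathds{E}(f)|_{s=1}=\mu\sqrt{d}$. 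For claim 3) I would substitute the closed form of part 1) into Stirling's approximation $\binom{2n}{n}\sim 4^n/\sqrt{\pi n}$ (handling even and odd $s$ separately, using $\binom{2m-1}{m-1}=\tfrac12\binom{2m}{m}$), which gives $\mathds{E}|S_s|/\sqrt{s}\to\sqrt{2/\pi}$ and hence $\mathds{E}(f)/\sqrt{d}\to\mu\sqrt{2/\pi}$; alternatively the central limit theorem gives $S_s/\sqrt{s}\Rightarrow N(0,1)$, and uniform integrability from the bounded second moments lets one pass to the limit $\mathds{E}|S_s/\sqrt{s}|\to\mathds{E}|N(0,1)|=\sqrt{2/\pi}$.

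The main obstacle is the exact evaluation in part 1): the sum $\sum_{k}|2k-s|\binom{s}{k}$ resists naive manipulation, and the floor/ceiling bookkeeping for even versus odd $s$ requires care. The telescoping identity above is precisely what makes the sum collapse, so the bulk of the work lies in discovering and verifying it; parts 2) and 3) are then short consequences of the second-moment bound and of Stirling's formula, respectively.
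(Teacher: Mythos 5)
Your proposal is correct, and parts 1) and 3) follow essentially the same route as the paper: the same reduction to $\mu\sqrt{d/s}\,\mathds{E}|S_s|$ for a Rademacher sum $S_s$, the same telescoping binomial identity (your $(2k-s)\binom{s}{k}=s[\binom{s-1}{k-1}-\binom{s-1}{k}]$ is just a cleaner packaging of the paper's piecewise decomposition of $C_s^i|s-2i|$), and Stirling's formula for the limit (the paper carries explicit error terms $e^{\lambda_s}$; your CLT-plus-uniform-integrability alternative is also valid since $\mathds{E}(S_s/\sqrt{s})^2=1$ for all $s$). Where you genuinely diverge is part 2): the paper proves $\mathds{E}(f)|_{s=1}>\mathds{E}(f)|_{s>1}$ by a two-case monotonicity analysis, computing the ratios $\mathds{E}(f)|_s/\mathds{E}(f)|_{s-2}=\sqrt{s(s-2)}/(s-1)<1$ for odd $s$ and $\mathds{E}(f)|_s/\mathds{E}(f)|_{s-1}=\sqrt{(s-1)/s}<1$ for even $s$, whereas you get the same conclusion in one line from $\mathds{E}|S_s|\le\sqrt{\mathds{E}(S_s^2)}=\sqrt{s}$ with equality iff $|S_s|$ is a.s.\ constant, which fails for every $s>1$ (e.g.\ $|S_s|$ takes both values $s$ and $|s-2|$ with positive probability). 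Your argument is shorter and needs no closed form at all for part 2); the paper's ratio computation buys the finer structure exploited in its Remark and in Figure 1 --- namely that $\mathds{E}(f)$ decreases to $\mu\sqrt{2d/\pi}$ along odd $s$ and increases toward it along even $s$ --- which your second-moment bound does not recover, though it is not needed for the lemma as stated.
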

\begin{proof}
Please see Appendix A.
\end{proof}
\noindent {\bf Remark on Lemma \ref{lemma-3}:} This lemma discloses that the best feature selection performance is obtained, when only one feature element is sampled by each row vector. In contrast, the performance tends to converge to a lower level as the number of sampled feature elements increases. However, in practice the desired sampling process is hard to be implemented due to the few knowledge of  feature location. As it will be detailed in the next section, what we can really implement is  to  sample only one feature element with high probability. Note that with the proof of this lemma, it can also be proved  that if $s$ is odd, $\mathds{E}(f)$  fast decreases to $\mu\sqrt{2d/\pi}$ with increasing $s$; in contrast, if $s$ is even, $\mathds{E}(f)$ quickly increases towards $\mu\sqrt{2d/\pi}$ as $s$ increases. But for arbitrary two adjacent $s$  larger than 1, their average value on $\mathds{E}(f)$, namely $(\mathds{E}(f)|_{s}+\mathds{E}(f)|_{s+1})/2$, is very close to $\mu\sqrt{2d/\pi}$. For clarity, the values of $\mathds{E}(f)$ over varying $s$ are calculated and shown in Figure \ref{fig-lemma3}, where instead of $\mathds{E}(f)$, $\frac{1}{\mu\sqrt{d}}\mathds{E}(f)$ is described since only the varying  $s$ is concerned. The specific character of $\mathds{E}(f)$  ensures that one can still achieve better performance over others by  sampling $s=1$ element with a relative high probability, along with the occurrence of a sequence of $s$  taking consecutive values slightly larger than 1.

\begin{figure*}[t]
\renewcommand{\captionfont}{\small}
\centering
\begin{tabular}{cc}
\includegraphics[width=0.45\textwidth]{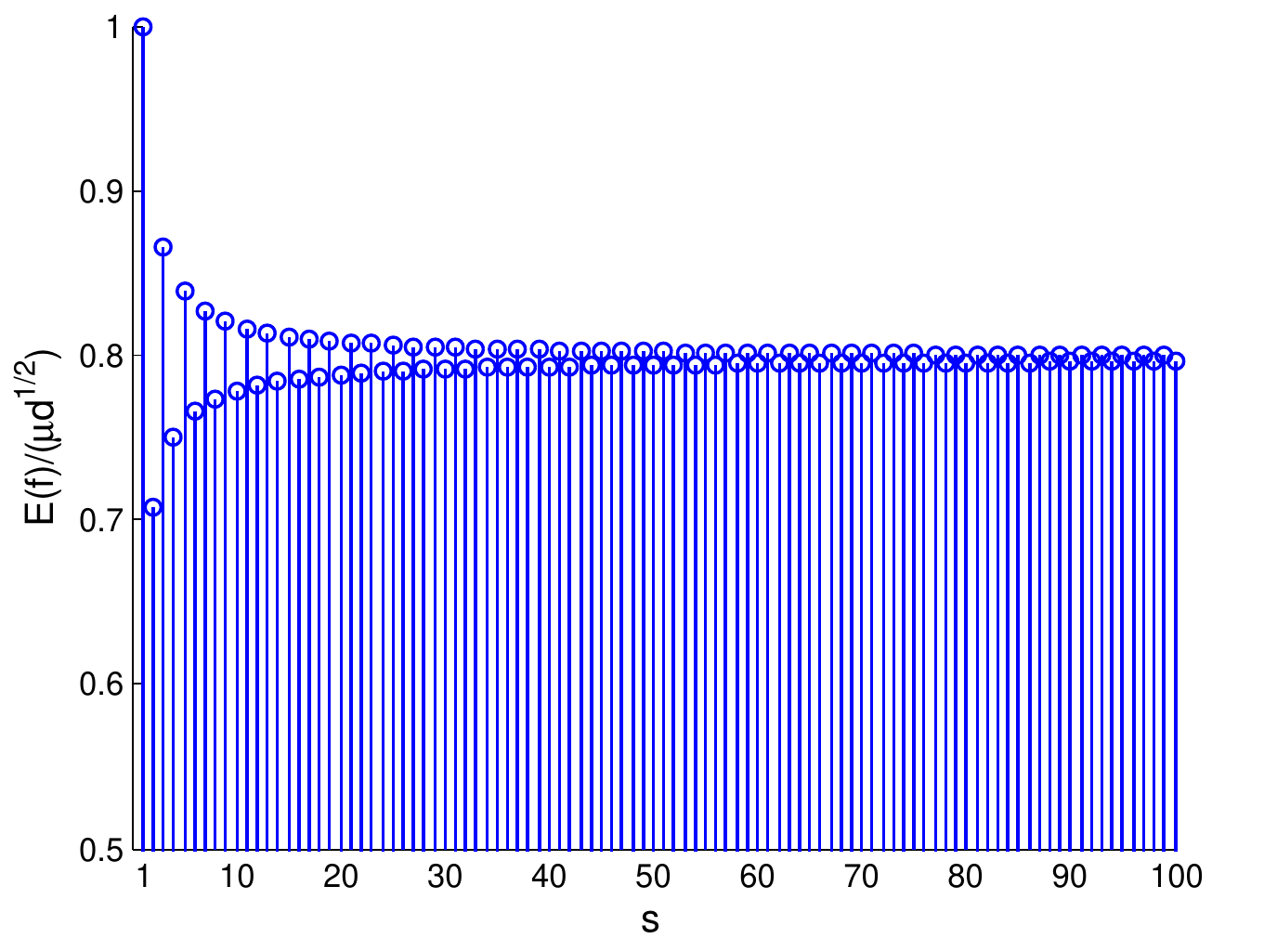}&\includegraphics[width=0.45\textwidth]{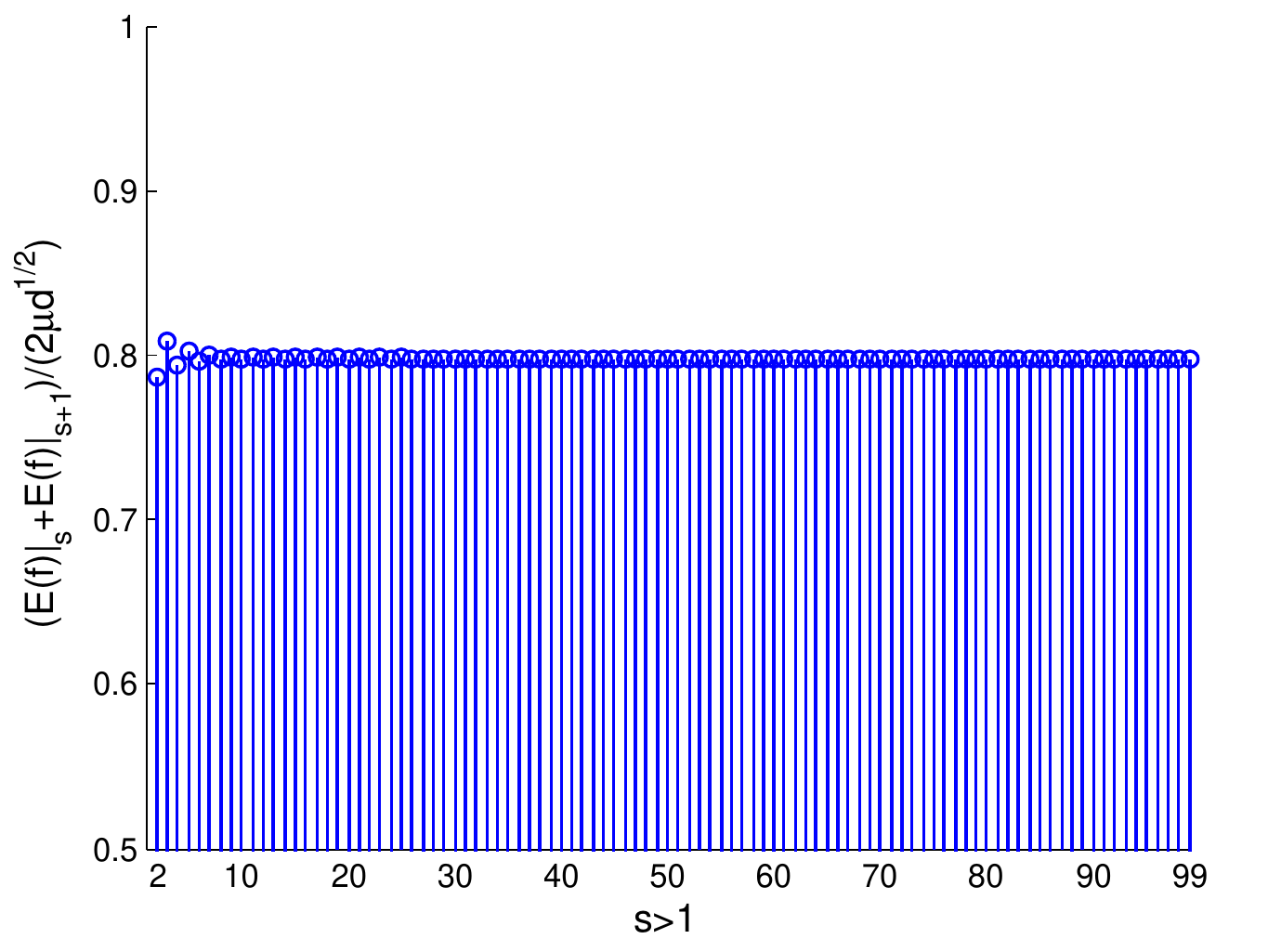}\\
(a)&(b)
\end{tabular}

\caption{ The process of $\frac{1}{\mu\sqrt{d}}\mathds{E}(f)$ converging to $\sqrt{2/\pi}~(\approx 0.7979)$ with increasing $s$ is described in (a); and in (b) the average value of two $\frac{1}{\mu\sqrt{d}}\mathds{E}(f)$  with adjacent  $s~(>1)$, namely $\frac{1}{2\mu\sqrt{d}}(\mathds{E}(f)|_s+\mathds{E}(f)|_{s+1})$,  is approved very close to $\sqrt{2/\pi}$. Note that $\mathds{E}(f)$ is calculated with the formula provided in Lemma \ref{lemma-3}. \label{fig-lemma3}}
\end{figure*}
\begin{lemma} \label{lemma-4}
Let $\mathbf{r}=[r_{1},...,r_{d}]$  randomly have  $1\leq s \leq d$ nonzero elements taking values $\pm\sqrt{d/s}$ with equal probability, and $\mathbf{z}=[z_1,...,z_d]$ with elements distributed as in formula \eqref{eq-zi}. Given $f(\mathbf{r},\mathbf{z})=|\langle \mathbf{r},\mathbf{z}\rangle|$, it is derived that:

$$\mathds{E}(f)|_{s=1}>\mathds{E}(f)|_{s>1}$$  if~  $(\frac{9}{8})^{\frac{3}{2}}[\sqrt{\frac{2}{\pi}}+(1+\frac{\sqrt{3}}{4})\frac{2}{\pi}(\frac{\mu}{\sigma})^{-1}]+2\Phi(-\frac{\mu}{\sigma})\leq 1$.
\end{lemma}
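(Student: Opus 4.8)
The plan is to reduce $f$ to a one–dimensional signed sum and then to peel off the ``$\pm\mu$'' part, which Lemma \ref{lemma-3} already controls exactly, from the Gaussian fluctuation. First I would note that, since the $s$ surviving entries of $\mathbf{r}$ carry independent uniform signs and the entries of $\mathbf{z}$ are symmetric, each nonzero product $r_{i_j}z_{i_j}$ equals $\eta_j\sqrt{d/s}\,x_j$ with $\eta_j$ a uniform sign and $x_j\sim N(\mu,\sigma^2)$, so that
$$f=\sqrt{\tfrac{d}{s}}\Big|\sum_{j=1}^{s}\eta_j x_j\Big|,\qquad \mathds{E}(f)|_{s=1}=\sqrt{d}\,\mathds{E}|x|.$$
For the $s=1$ value I would use the folded--normal mean $\mathds{E}|x|=\sigma\sqrt{2/\pi}\,e^{-\mu^2/2\sigma^2}+\mu\big(1-2\Phi(-\mu/\sigma)\big)$ and keep the clean lower bound $\mathds{E}(f)|_{s=1}\ge \mu\sqrt{d}\,\big(1-2\Phi(-\mu/\sigma)\big)$; this is what produces the $2\Phi(-\mu/\sigma)$ term in the stated condition.

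For $s>1$ I would write $x_j=\mu+\sigma g_j$ with $g_j\sim N(0,1)$, so $\eta_j x_j=\mu\eta_j+\sigma h_j$ with $h_j:=\eta_j g_j$. The key observation is that, because $g_j$ is symmetric, $h_j\sim N(0,1)$ is \emph{independent} of the sign $\eta_j$, whence
$$\sum_{j=1}^{s}\eta_j x_j=\mu S_s+\sigma G_s,\qquad S_s:=\sum_{j}\eta_j,\quad G_s:=\sum_j h_j\sim N(0,s),$$
with $S_s$ a simple random walk independent of $G_s$. The triangle inequality then gives
$$\mathds{E}(f)|_{s}\le \mu\sqrt{d}\,\frac{\mathds{E}|S_s|}{\sqrt{s}}+\sigma\sqrt{d}\,\sqrt{\tfrac{2}{\pi}},$$
where the second term is exact (it is $\sigma\sqrt{d/s}\,\mathds{E}|G_s|=\sigma\sqrt{d}\sqrt{2/\pi}$) and is the origin of the $\sqrt{2/\pi}$ term. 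The first term is precisely the expectation $\mathds{E}(f)$ of Lemma \ref{lemma-3} with spike height $\mu$; by Lemma \ref{lemma-3} and its remark, $\mathds{E}|S_s|/\sqrt s$ is largest over $s>1$ at $s=3$, where it equals $\sqrt3/2$. Hence $\mathds{E}(f)|_{s}\le \mu\sqrt d\,\tfrac{\sqrt3}{2}+\sigma\sqrt d\,\sqrt{2/\pi}$ uniformly in $s>1$.

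Combining the two estimates, $\mathds{E}(f)|_{s=1}>\mathds{E}(f)|_{s>1}$ follows once $\mu\big(1-2\Phi(-\mu/\sigma)\big)>\mu\tfrac{\sqrt3}{2}+\sigma\sqrt{2/\pi}$, i.e. $\tfrac{\sqrt3}{2}+\sqrt{2/\pi}\,(\mu/\sigma)^{-1}+2\Phi(-\mu/\sigma)<1$ --- a sufficient condition of exactly the stated shape (a random-walk constant near $\tfrac{\sqrt3}{2}$, a Gaussian coefficient of $(\mu/\sigma)^{-1}$, and the tail term $2\Phi(-\mu/\sigma)$). The looser numerical constants $(9/8)^{3/2}$ and $(1+\sqrt3/4)$ in the paper's formulation I would read as the cost of bounding the random-walk and Gaussian parts \emph{jointly}, through the conditional folded-normal mean of $\sum\eta_j x_j$ given the signs, rather than separating them: the conditioning forces one to control $\sqrt{s}\sigma\sqrt{2/\pi}\,e^{-\mu^2 S_s^2/(2s\sigma^2)}+\mu|S_s|(1-2\Phi(\cdot))$, which generates the cross term of order $\tfrac2\pi$ and an overshoot factor at small $s$.

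The hard part is the uniform-in-$s$ control for $s>1$: at small $s$ the sum $\sum\eta_j x_j$ is far from Gaussian, so the reassuring $\sqrt{2/\pi}$ heuristic fails exactly where the comparison is tightest (around $s=3$). The decomposition into the discrete part $\mu S_s$ (handled exactly by Lemma \ref{lemma-3}) and the independent Gaussian part $\sigma G_s$ is precisely what tames this; the only remaining delicacy is the elementary-but-fiddly verification that $\mathds{E}|S_s|/\sqrt s\le\sqrt3/2$ for every $s\ge2$ with maximum at $s=3$, an estimate on central binomial coefficients that is already encapsulated in Lemma \ref{lemma-3} and its remark and so may be invoked rather than redone.
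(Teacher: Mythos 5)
Your proof is correct, and it takes a genuinely different and substantially cleaner route than the paper's. The paper conditions on the number $i$ of components drawn from $N(-\mu,\sigma^2)$, writes $t=\sum_j z_j$ as a binomial mixture of Gaussians $N((s-2i)\mu,s\sigma^2)$, evaluates $\mathds{E}|t|$ through folded-normal means, and then bounds the resulting three sums $f_1,f_2,f_3$ separately --- splitting the binomial sum at $|s-2i|\gtrless\sqrt{s}$, invoking Stirling's approximation, and treating $s$ even and odd in separate branches; this joint treatment is exactly what produces the constants $(\tfrac98)^{3/2}$ (the $s=3$ overshoot of $(\tfrac{s^2}{s^2-1})^{s/2}$) and $(1+\tfrac{\sqrt3}{4})\tfrac{2}{\pi}$. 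Your decomposition $z_j=\eta_j(\mu+\sigma g_j)$, together with the observation that $h_j=\eta_j g_j\sim N(0,1)$ is independent of $\eta_j$ (so that $\sum_j\eta_j x_j=\mu S_s+\sigma G_s$ with $S_s\perp G_s$), lets the triangle inequality split the problem into the discrete part, which Lemma 3 already controls exactly with $\max_{s\ge2}\mathds{E}|S_s|/\sqrt{s}=\sqrt3/2$ at $s=3$, and an exactly computable Gaussian part $\sigma\sqrt{2d/\pi}$. The resulting sufficient condition $\tfrac{\sqrt3}{2}+\sqrt{2/\pi}\,(\mu/\sigma)^{-1}+2\Phi(-\mu/\sigma)<1$ is strictly weaker than the paper's, since $\sqrt3/2<(\tfrac98)^{3/2}\sqrt{2/\pi}\approx0.952$ and $\sqrt{2/\pi}<(\tfrac98)^{3/2}(1+\tfrac{\sqrt3}{4})\tfrac{2}{\pi}\approx1.089$, so the lemma as stated follows a fortiori; you therefore prove a slightly stronger result (a larger admissible range of $\sigma/\mu$) with far less computation. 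The only point worth making explicit in a write-up is the justification that $\mathds{E}|S_s|/\sqrt{s}\le\sqrt3/2$ for all $s\ge2$: the monotonicity ratios $\sqrt{s(s-2)}/(s-1)<1$ (odd $s$) and $\sqrt{(s-1)/s}<1$ (even $s$) from the proof of Lemma 3, part 2, give exactly this, so the appeal to that lemma is legitimate rather than circular.
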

\begin{proof}
Please see Appendix B.
\end{proof}
\noindent {\bf Remark on Lemma \ref{lemma-4}:} This lemma expands  Lemma \ref{lemma-3} to a more general case where  $|z_i|$ is allowed to vary in some range. In other words, there is an  upper bound on $\frac{\sigma}{\mu}$ for $\mathds{E}(f)|_{s=1}>\mathds{E}(f)|_{s>1}$, since $\Phi(-\frac{\mu}{\sigma})$ decreases monotonically with respect to $\frac{\mu}{\sigma}$. Clearly the larger upper bound  for $\frac{\sigma}{\mu}$ allows more variation of $|z_i|$.  In practice the real upper bound  should be larger than that we have derived as a sufficient condition in this   lemma.

\vspace{6pt}
\begin{lemma} \label{lemma-5}
Let $\mathbf{r}=[r_{1},...,r_{d}]$  have elements i.i.d drawn from $N(0,1)$, and $\mathbf{z}=[z_1,...,z_d]$ with elements being $\pm\mu$ equiprobably, where $\mu$ is a positive constant. Given $f(\mathbf{r},\mathbf{z})=|\langle \mathbf{r}, \mathbf{z}\rangle|$, its expected value  $\mathds{E}(f)=\mu\sqrt{\frac{2 d}{\pi}}$.
\end{lemma}
\begin{proof}
Please see Appendix C.
\end{proof}
\noindent {\bf Remark on Lemma \ref{lemma-5}:} Comparing this lemma with Lemma \ref{lemma-3},  clearly the row vector with Gaussian distribution shares the same feature selection level with sparse row vector with a relatively large $s$. This explains why in practice the sparse random matrices usually can present comparable classification performance  with Gaussian matrix. More importantly, it implies that the sparsest sampling process provided in Lemma 3  should outperform Gaussian matrix on feature selection.
\section{ Proposed sparse random matrix}
\label{sec-proposal}
The lemmas of the former section have proved that the best feature selection performance can be obtained, if only one feature element is sampled by each row vector of random matrix.
It is now interesting to know if the condition above can be satisfied in the practical setting,  where the high-dimensional data consists of both feature elements and redundant elements, namely $\mathbf{v}=[\mathbf{v}^f ~\mathbf{v}^r]$ and $\mathbf{w}=[\mathbf{w}^f ~\mathbf{w}^r]$. According to the theoretical condition mentioned above, it is known  that the row vector  $\mathbf{r}=[\mathbf{r}^f ~ \mathbf{r}^r]$ can obtain the best feature selection, only when  $||\mathbf{r}^f||_0=1$, where the quasi-norm $\ell_0$  counts the number of nonzero elements in $\mathbf{r}^f$.  Let $\mathbf{r}^f\in\mathds{R}^{d_f}$, and $\mathbf{r}^r\in\mathds{R}^{d_r}$, where $d=d_f+d_r$. Then the desired row vector should have $d/d_f$  uniformly distributed nonzero elements such that $\mathds{E}(||\mathbf{r}^f||_0)=1$.  However, in practice the desired distribution for row vectors is often hard to be determined, since for a real dataset   the number of feature elements is usually unknown.

In this sense, we are motivated to propose a general  distribution for the matrix elements, such that $||\mathbf{r}^f||_0=1$ holds with high probability  in the setting where the feature distribution is unknown. In other words, the random matrix should hold the distribution maximizing the ratio $\text{Pr}(||\mathbf{r}^f||_0=1)/\text{Pr}(||\mathbf{r}^f||_0\in\{2,3,...,d_f\})$. In practice, the desired distribution implies that the random matrix has exactly one nonzero position  per column, which can be simply derived as below. Assume  a random matrix $\mathbf{R} \in \mathbb{R}^{k\times d}  $  randomly holding $1\leq s'\leq k$ nonzero elements  per \emph{column},  equivalently $s'd/k$ nonzero elements per \emph{row}, then  one can derive that


\begin{equation}
\label{eq-pr}
 \begin{aligned}
&\text{Pr}(||\mathbf{r}^f||_0=1)/\text{Pr}(||\mathbf{r}^f||_0\in\{2,3,...,d_f\})\\&=\frac{\text{Pr}(||\mathbf{r}^f||_0=1)}{1-\text{Pr}(||\mathbf{r}^f||_0=0)-\text{Pr}(||\mathbf{r}^f||_0=1)}\\
&=\frac{C_{d_f}^1C_{d_r}^{s'd/k-1}}{C_d^{s'd/k}-C_{d_r}^{s'd/k}-C_{d_f}^1C_{d_r}^{s'd/k-1}}\\
&=\frac{d_fd_r!}{\frac{d!(d_r-s'd/k+1)!}{s'd/k(d-s'd/k)!}-\frac{d_r!(d_r-s'd/k+1)}{s'd/k}-d_fd_r!}
 \end{aligned}
\end{equation}
 From  the last equation in formula \eqref{eq-pr}, it can be observed that the increasing $s'd/k$ will reduce the value of formula \eqref{eq-pr}. In order to maximize the value,  we have to set $s'=1$. This indicates that the desired random matrix has only one nonzero  element per column.

The proposed random matrix with exactly one nonzero element per column presents two obvious advantages, as detailed below.

\begin{itemize}
\item In complexity, the proposed matrix clearly presents much higher sparsity than existing random projection matrices.  Note that, theoretically the very sparse random matrix with $q=\sqrt{d}$ \cite{Li06} has higher sparsity than the proposed matrix when $k<\sqrt{d}$. However, in practice the case $k<\sqrt{d}$ is usually not of practical interest,  due to the weak performance caused by  large compression rate $d/k$ ($>\sqrt{d}$).
    \item In performance, it can be derived that the proposed matrix outperforms other more dense matrices, if the projection dimension $k$ is not much smaller than the number $d_f$ of feature elements included in the high-dimensional vector. To be specific,  from Figure \ref{fig-lemma3}, it can be observed  that the dense matrices with column weight $s'>1$ share comparable  feature selection performance, because as $s'$ increases they tend to sample more than one feature element (namely $||\mathbf{r}^f||_0>1$) with higher probability. Then the proposed  matrix with $s'=1$ will present better performance  than them, if $k$ ensures $||\mathbf{r}^f||_0=1$ with high probability, or equivalently the ratio $\text{Pr}(||\mathbf{r}^f||_0=1)/ \text{Pr}(||\mathbf{r}^f||_0\in\{2,3,...,d_f\})$ being relatively large. As shown in formula \eqref{eq-pr}, the condition above can be  better satisfied, as $k$ increases. Inversely, as $k$ decreases, the feature selection advantage of the proposed matrix will degrade. Recall that the proposed matrix is weaker than other more dense matrices on distance preservation, as demonstrated in section \ref{subsec-sparse}. This means that the proposed matrix will perform worse than others when its feature selection advantage is not obvious. In other words, there should exist  a  lower bound for $k$ to ensure the performance advantage of the proposed matrix, which is also verified in the following experiments.  It can be roughly estimated that the  lower bound of $k$ should be  on the order of $d_f$, since for the proposed matrix with column weight $s'=1$, the $k=d_f$ leads to $\mathds{E}(||\mathbf{r}^f||_0)=d/k\times d_f/d=1$. In practice,   the performance advantage seemingly can be maintained  for a relatively small $k(<d_f)$. For instance, in the following  experiments on synthetic data,  the lower bound of $k$ is as small as $d_f/20$. This phenomenon  can be explained by the fact that to obtain performance advantage, the probability  $\text{Pr}(||\mathbf{r}^f||_0=1)$ is only required to be relatively large rather than to be equal to 1, as demonstrated in the remark on Lemma \ref{lemma-3}.

\end{itemize}

\section{Experiments }
\label{sec-experiment}
\subsection{Setup}
This section verifies the feature selection advantage of the proposed currently sparest matrix (StM) over other popular matrices, by  conducting binary classification on both synthetic data and real data. Here the synthetic data with labeled feature elements is provided to specially observe the relation between the projection dimension and feature number, as well as the impact of redundant elements.  The real data involves three typical  datasets in the area of dimensionality reduction: face image, DNA microarray and text document. As for the binary classifier, the classical support vector machine (SVM) based on Euclidean distance is adopted.   For comparison, we  test three popular random matrices: Gaussian random matrix (GM), sparse random matrix (SM)  as in formula \eqref{eq-rij} with $q=3$ \cite{Achlioptas03} and very sparse random matrix (VSM) with $q=\sqrt{d}$ \cite{Li06}.


The simulation parameters are introduced as follows. It is known that the repeated random projection tends to improve the feature selection, so here each classification decision is voted by performing 5 times the random projection \cite{Fern03}. The  classification accuracy at each projection dimension $k$ is derived by taking the \textbf{average} of  \textbf{100000} simulation runs. In each simulation, four matrices are tested with the same samples. The projection dimension $k$ decreases uniformly from the high dimension $d$. Moreover, it is necessary to note that, for some  datasets containing more than two classes of samples,  the SVM classifier randomly selects two classes to conduct binary classification in each simulation. For each class of data, one half of samples are randomly selected for training, and the rest for testing.
\begin{table*}[htp]

\caption{ Classification accuracies on the synthetic data which have $d=2000$ and redundant elements suffering from three different varying levels $\sigma_r$. The best performance is highlighted in bold. The lower bound of projection dimension $k$ that ensures the proposal outperforming others in all datasets is  highlighted in bold as well. Recall that the acronyms GM, SM, VSM and StM represent Gaussian random matrix, sparse random matrix with $q=3$, very sparse rand matrix with $q=\sqrt{d}$, and the proposed sparsest random matrix, respectively. \label{tab-synthetic}}

\centering

\small{
\begin{tabular}{|c||r|c|c|c|c|c|c|c|c|c|}
\hline

&$k$&50 &100& \textbf{200}& 400& 600 &800& 1000& 1500& 2000\\\hline\hline

\multirow{4}{*}{\rotatebox{90}{$\sigma_r=8$}}&GM& 70.44& 67.93& 84.23& 93.31& 95.93& 97.17& 97.71& 98.35& 98.74\\
&SM& \textbf{70.65}&  67.90& 84.43& 93.03& 95.97& 96.86& 97.78& 98.36&  98.80\\
&VSM& 70.55& 68.05& 84.46& 93.19&  96.00& 96.99& 97.68& 98.38& 98.76\\
&StM& 70.27& \textbf{68.09}& \textbf{84.66}& \textbf{94.22}& \textbf{97.11}& \textbf{98.03}&\textbf{98.67}& \textbf{99.37}& \textbf{99.57}\\
\hline\hline
\multirow{4}{*}{\rotatebox{90}{$\sigma_r=12$}}&GM &\textbf{ 64.89}& \textbf{63.06}& 76.08& 85.04&88.46& 90.21& 91.16& 92.68&93.32\\
&SM& 64.67& 62.66& 75.85& 85.03&  88.30&90.09& 91.21&  92.70&  93.30\\
&VSM& 65.17& 62.95& 76.12& 85.14&  88.80& 90.46& 91.37& 92.88&93.64\\
&StM& 64.85& 63.00&\textbf{76.82}&\textbf{88.41}& \textbf{93.51}& \textbf{96.12}& \textbf{97.59}&\textbf{99.13}&\textbf{ 99.68}\\
\hline\hline

\multirow{4}{*}{\rotatebox{90}{$\sigma_r=16$}}&GM &60.90& 59.42& 70.13& 78.26& 81.70&83.82& 84.74&  86.50& 87.49\\
&SM& 60.86&59.58&69.93& 78.04&81.66& 83.85&84.79& 86.55&87.39\\
&VSM& 60.98& \textbf{59.87}& 70.27& 78.49& 81.98& 84.36& 85.27& 86.98& 87.81\\
&StM& \textbf{61.09}& 59.29& \textbf{71.58}& \textbf{84.56}& \textbf{91.65}&  \textbf{95.50}& \textbf{97.24}&\textbf{98.91}& \textbf{99.30}\\
\hline
\end{tabular}
}
\end{table*}
\subsection {Synthetic data experiments}

\subsubsection{Data generation}

The synthetic data is developed to evaluate the two factors as follows:
\begin{itemize}
\item the relation between the lower bound of projection dimension $k$ and the feature dimension $d_f$;
\item the negative impact of  redundant elements, which are ideally assumed to be zero in the previous theoretical proofs.
\end{itemize}
To this end, two classes of synthetic data with $d_f$ feature elements and $d-d_f$ redundant elements are generated in two  steps:
\begin{itemize}
\item  randomly build a vector $\tilde{\mathbf{v}}\in\{\pm1\}^d$, then define a vector $\tilde{\mathbf{w}}$ distributed as $\tilde{w}_i=-\tilde{v}_i$, if $1\leq i\leq d_f$, and $\tilde{w}_i=\tilde{v}_i$, if $d_f< i\leq d$;
\item  generate two classes of datasets $\mathcal{V}$ and $\mathcal{W}$  by i.i.d sampling $v^f_i\in N(\tilde{v}_i,\sigma_f^2)$ and $w^f_i\in N(\tilde{w}_i,\sigma_f^2)$, if $1\leq i\leq d_f$; and $v^r_i\in N(\tilde{v}_i,\sigma_r^2)$ and $w^r_i\in N(\tilde{w}_i,\sigma_r^2)$, if $d_f< i\leq d$.
\end{itemize} Subsequently, the distributions on pointwise distance can be approximately derived as  $|v_i^f-w_i^f|\in N(2,2\sigma_f^2)$ for feature elements and  $(v_i^r-w_i^r)\in N(0,2\sigma_r^2)$ for redundant elements, respectively. To be close to reality, we introduce some unreliability for   feature elements and redundant elements by adopting relatively large variances. Precisely, in the simulation $\sigma_f$ is fixed to 8 and $\sigma_r$ varies in the set $\{8, 12, 16\}$. Note that, the probability of $(v_i^r-w_i^r)$ converging to zero will decrease as $\sigma_r$ increases. Thus the increasing $\sigma_r$ will be a challenge for our previous theoretical conjecture derived on the  assumption of  $(v_i^r-w_i^r)=0$.  As for the size of the dataset, the data dimension $d$ is set to 2000, and the feature dimension $d_f=1000$. Each dataset consists of  100 randomly generated samples.
\subsubsection{Results}
 Table \ref{tab-synthetic} shows the  classification performance of four types of matrices over evenly varying projection dimension $k$. It is clear that the proposal  always outperforms others, as $k>200$ (equivalently, the compression ratio $k/d>0.1$). This result exposes two positive clues. First,  the proposed  matrix preserves  obvious advantage over others, even when  $k$ is relatively small, for instance, $k/d_f$ is allowed to be as small as 1/20 when $\sigma_r=8$. Second, with the interference of redundant elements, the proposed matrix still outperforms others, which implies that the previous theoretical result is also applicable to the real case where the redundant elements cannot be simply neglected. 


\subsection{Real data experiments}

Three types of representative high-dimensional datasets are tested for random projection over evenly varying projection dimension $k$. The datasets are first briefly introduced, and then the results are illustrated and analyzed. Note that, the simulation is developed to compare the feature selection performance of different random projections, rather than to obtain the best performance. So to reduce the simulation load,  the original high-dimensional data is uniformly downsampled to a relatively low dimension. Precisely, the  face image, DNA, and text are reduced to the dimensions 1200, 2000 and 3000, respectively. Note that, in terms of JL lemma, the original high dimension allows to be reduced to  arbitrary values (not limited to 1200, 2000 or 3000), since theoretically the distance preservation of random projection is independent of the size of high-dimensional data \cite{Achlioptas03}.


\subsubsection {Datasets}
\begin{itemize}
        \item   Face image\begin{itemize}
                            \item AR  \cite{Martinez98} : As in \cite{Martinez01}, a subset of 2600 frontal faces from 50 males and 50 females are examined. For some persons, the faces were taken at different times, varying the lighting, facial expressions (open/closed eyes, smiling/not smiling) and facial details (glasses/no glasses). There are 6 faces with dark glasses and 6 faces partially disguised by scarfs among 26 faces per person.
                            \item    Extended Yale B  \cite{Georghiades01,Lee05}: This dataset includes about 2414 frontal faces of 38 persons, which suffer varying illumination changes.
                            \item FERET  \cite{Phillips98}: This dataset consists of more than 10000 faces from more than 1000 persons taken in largely varying circumstances. The database is further divided into several sets which are formed for different evaluations. Here we evaluate the 1984 \emph{frontal} faces of 992 persons each with 2 faces separately extracted from sets \emph{fa} and \emph{fb}.
                            \item  GTF  \cite{Nefian00}:  In this dataset, 750 images from 50 persons were captured at different scales and orientations under variations in illumination and expression. So the cropped faces suffer from serious pose variation.
                            \item ORL  \cite{Samaria94}: It contains 40 persons each with 10 faces. Besides slightly varying lighting and expressions, the faces also undergo slight changes on pose.

                          \end{itemize}

                          \begin{table*}[t]

\caption{  Classification accuracies on five face datasets with  dimension $d= 1200$. For each projection dimension $k$, the best performance is highlighted in bold. The lower bound of projection dimension $k$ that ensures the proposal outperforming others in all datasets is  highlighted in bold as well. Recall that the acronyms GM, SM, VSM and StM represent Gaussian random matrix, sparse random matrix with $q=3$, very sparse random matrix with $q=\sqrt{d}$, and the proposed sparsest random matrix, respectively. \label{tab-face}}
\begin{center}
{\small
\begin{tabular}{|c||r|c|c|c|c|c|c|c|}
\hline
&$k$&30&60&\textbf{120}&240&360&480&600\\\hline\hline

 \multirow{4}{*}{\rotatebox{90}{AR}}&GM &\textbf{98.67}  &  99.04  &  99.19  &  99.24  &  99.30 &   99.28 &   99.33  \\
 &SM &98.58 &   99.04  &  99.21 &   99.25  &  99.31  &  99.30  &  99.32 \\
& VSM & 98.62  &  99.07  &  99.20  &  99.27   & 99.30 &   99.31  &  99.34 \\
& StM& 98.64 &   \textbf{99.10 } &  \textbf{99.24 } &  \textbf{99.35}  &  \textbf{99.48} &   \textbf{ 99.50}  &  \textbf{99.58} \\
 \hline\hline

 \multirow{4}{*}{\rotatebox{90}{Ext-YaleB}}&GM &97.10  &  \textbf{98.06} &   98.39  &  98.49  &  98.48  &  98.45 &   98.47 \\
 &SM &97.00 &  98.05  &  98.37 &   98.49   & 98.48  &  98.45   & 98.47 \\
& VSM &97.12  &  98.05   & 98.36  &  98.50  &  98.48  &  98.45   & 98.48 \\
& StM &\textbf{97.15}  &  \textbf{98.06}  &  \textbf{98.40 } &  \textbf{98.54}  &  \textbf{98.54}   & \textbf{98.57 }&   \textbf{98.59}\\
 \hline\hline

 \multirow{4}{*}{\rotatebox{90}{FERET}}&GM &86.06  &  86.42   & 86.31   & 86.50  &  86.46 &   86.66  &  86.57\\
 &SM &86.51  &  86.66  &  87.26 &   88.01  &  88.57 &   89.59  &  90.13 \\
& VSM&\textbf{87.21}  &  87.61  &  89.34  &  91.14   & 92.31 &   93.75  &  93.81 \\
& StM &87.11 &   \textbf{88.74}  &  \textbf{92.04}  & \textbf{ 95.38}  & \textbf{ 96.90}  &  \textbf{97.47}  &  \textbf{97.47} \\
 \hline\hline

 \multirow{4}{*}{\rotatebox{90}{GTF}}&GM&96.67  &  97.48  &  97.84 &   98.06  &  98.09  &  98.10  &  98.16\\
 &SM &96.63  &  97.52  &  97.85  &  98.06 &   98.09  &  98.13  &  98.16 \\
& VSM &\textbf{96.69 } &  \textbf{97.57} &  97.87  &  98.10  &  98.13  &  98.14   & 98.16 \\
& StM &96.65  &  97.51 &   \textbf{97.94} &   \textbf{98.25} &   \textbf{98.40} &   \textbf{98.43}  &  \textbf{98.53} \\
 \hline\hline

 \multirow{4}{*}{\rotatebox{90}{ORL}}&GM &94.58  &  95.69  &  96.31 &   96.40 &   96.54   & 96.51  &  96.49\\
 &SM &94.50  &  95.63  &  96.36  &  96.38  &  96.48  &  96.47  &  96.48\\
& VSM &94.60  &  \textbf{95.77 } &  96.33  &  96.35  &  96.53 &   96.55 &   96.46 \\
& StM &  \textbf{94.64} &   95.75  &  \textbf{96.43} &   \textbf{96.68}  &  \textbf{96.90}&    \textbf{97.04}  & \textbf{ 97.05} \\
 \hline

      \end{tabular}
      }
\end{center}
\end{table*}

        \item  DNA microarray
                         \begin{itemize}
                         \item  Colon \cite{Alon08061999}: This is a dataset consisting of  40 colon tumors and 22 normal colon tissue samples. 2000 genes with highest intensity across the samples are considered.
                         \item  ALML \cite{Golub15101999}: This dataset contains 25 samples taken from patients suffering from acute myeloid
                           leukemia (AML) and  47 samples from patients suffering from acute lymphoblastic leukemia (ALL). Each sample is expressed with 7129 genes.

                         \item  Lung \cite{Beer081012} : This dataset contains 86 lung tumor and 10 normal lung samples. Each sample holds 7129 genes.

                          \end{itemize}
        \item  Text document \cite{Caideng09}\footnote{Publicly available at \url{http://www.cad.zju.edu.cn/home/dengcai/Data/TextData.html}}
        \begin{itemize}
          \item TDT2: The recently modified dataset includes 96 categories of total 10212 documents/samples. Each document is represented with vector of length 36771.  This paper adopts the first 19 categories each with more than 100 documents, such that each category is tested with  100 randomly selected documents.
          \item 20Newsgroups (version 1): There are 20 categories of 18774 documents in this dataset. Each document has vector dimension 61188. Since the documents are not equally distributed in the 20 categories, we randomly select  600 documents for each category, which is nearly the maximum number we can assign to all categories.
          \item RCV1: The original dataset contains 9625 documents each with 29992 distinct words, corresponding to 4 categories  with 2022, 2064, 2901, and 2638 documents respectively. To reduce computation, this paper randomly selects only 1000 documents for each category.
        \end{itemize}
      \end{itemize}

\subsubsection{Results}

\begin{table*}[t]

\caption{ Classification accuracies on three DNA datasets with dimension $d= 2000$. For each projection dimension $k$, the best performance is highlighted in bold. The lower bound of projection dimension $k$ that ensures the proposal outperforming others in all datasets is  highlighted in bold as well. Recall that the acronyms GM, SM, VSM and StM represent Gaussian random matrix, sparse random matrix with $q=3$, very sparse random matrix with $q=\sqrt{d}$, and the proposed sparsest random matrix, respectively. \label{tab-dna}}

\centering
{\small
\begin{tabular}{|c||r|c|c|c|c|c|c|c|c|}
\hline
&$k$&50&\textbf{100}&200&400&600&800&1000&1500\\\hline\hline

 \multirow{4}{*}{\rotatebox{90}{Colon}} &GM  &77.16&  77.15 &   77.29  &  77.28  &  77.46  &  77.40 &  77.35  &  77.55\\
&SM &\textbf{77.23} & 77.18   & 77.16   & 77.36   & 77.42   & 77.42  &  77.39   & 77.54\\
&VSM & 76.86 & 77.19   & 77.34  &  77.52  &  77.64  &  77.61  & 77.61   & 77.82\\
&StM &76.93  &\textbf{77.34}  &  \textbf{77.73}   & \textbf{78.22}   & \textbf{78.51}   & \textbf{78.67}   & \textbf{78.65}   & \textbf{78.84}\\
\hline\hline

 \multirow{4}{*}{\rotatebox{90}{ALML}}   & GM &\textbf{65.11} &66.22 &   66.96  &  67.21 &   67.23  &  67.24  &  67.28  &  67.37\\
 &SM &65.09  &66.16  &  66.93  &  67.25  &  67.22   & 67.31   & 67.31   & 67.36\\
 &VSM&64.93 & 67.32&68.52&69.01&69.15&69.16&69.25&69.33\\
 &StM &65.07 &\textbf{68.38}&\textbf{70.43}&\textbf{71.39}&\textbf{71.75}&\textbf{71.87}&\textbf{72.00}&\textbf{72.11}\\
 \hline\hline

 \multirow{4}{*}{\rotatebox{90}{Lung}}& GM  &98.74  &  98.80&98.91&98.96&98.95&98.96&98.95&98.97\\
&SM & 98.71  & 98.80&98.92&98.97&98.96&98.98&98.97&98.97\\
&VSM&  \textbf{98.81}  &  99.21 &99.48 &99.57 &99.58 &99.61 &99.61 &99.61\\
&StM & 98.70 &\textbf{99.48}&\textbf{99.69}&\textbf{99.70}&\textbf{99.69}&\textbf{99.72}&\textbf{99.68}&\textbf{99.65}\\
\hline
 \end{tabular}
 }
\end{table*}

\begin{table*}[ht!]
\caption{ Classification accuracies on three Text datasets with dimension $d= 3000$. For each projection dimension $k$, the best performance is highlighted in bold. The lower bound of projection dimension $k$ that ensures the proposal outperforming others in all datasets is  highlighted in bold as well. Recall that the acronyms GM, SM, VSM and StM represent Gaussian random matrix, sparse random matrix with $q=3$, very sparse random matrix with $q=\sqrt{d}$, and the proposed  sparsest random matrix, respectively. \label{tab-text}}

\centering
{\small
\begin{tabular}{|c||r|c|c|c|c|c|c|c|}
\hline
&k&150 &300 &\textbf{600} &900& 1200 &1500& 2000\\
\hline\hline
 \multirow{4}{*}{\rotatebox{90}{TDT2}}& GM&  \textbf{83.64}&   83.10&  82.84&  82.29&  81.94&  81.67&  81.72\\
& SM& 83.61&  82.93&  83.10&  82.28&  81.92&  81.55&  81.76\\
& VSM&  82.59&  82.55&  82.72&   82.20&  81.74&  81.47&  81.78\\
& StM&  82.52&  \textbf{83.15}& \textbf{ 84.06}& \textbf{ 83.58}&  \textbf{83.42}&  \textbf{82.95}&  \textbf{83.35}\\
\hline\hline
 \multirow{4}{*}{\rotatebox{90}{Newsgroup}}
&GM  &  \textbf{ 75.35}&   \textbf{74.46}&   72.27&   71.52&   71.34&   70.63&   69.95\\
& SM &   75.21&   74.43&   72.29&    71.30&   71.07&   70.34&   69.58\\
& VSM &   74.84&   73.47&   70.22&   69.21&   69.28&   68.28&   68.04\\
&StM  &   74.94&    74.20&   \textbf{72.34}&   \textbf{71.54}&   \textbf{71.53}&   \textbf{70.46}&    \textbf{70.00}\\
\hline\hline
\multirow{4}{*}{\rotatebox{90}{RCV1}}
&GM& 85.85& 86.20& 81.65& 78.98& 78.22& 78.21& 78.21\\
&SM& \textbf{86.05}& 86.19& 81.53& 79.08& 78.23& 78.14&78.19\\
&VSM& 86.04& 86.14& 81.54& 78.57& 78.12& 78.05& 78.04\\
&StM&85.75& \textbf{86.33}&\textbf{85.09}& \textbf{83.38}& \textbf{ 82.30}& \textbf{81.39}& \textbf{80.69}\\
\hline

\end{tabular}
}
\end{table*}
Tables \ref{tab-face}-\ref{tab-text} illustrate the classification performance of four classes of matrices on three typical high-dimensional data: face image, DNA microarray and text document. It can be observed that, all results are  consistent with the theoretical conjecture stated in section \ref{sec-proposal}. Precisely,   the proposed matrix will always perform better than others, if $k$ is larger than some thresholds, i.e. $k>120$ (equivalently, the compression ratio $k/d>1/10$) for all face image data, $k>100$ ($k/d>1/20$) for all  DNA data,  and $k>600$ ($k/d>1/5$) for all text data. Note that, for some individual datasets, in fact we can obtain smaller thresholds than the uniform thresholds described above, which means that for these datasets, our performance advantage can be ensured in lower projection dimension. It is worth noting that our performance gain usually varies across the types of data. For most data, the gain is on the level of around $1\%$, except for some special cases, for which the gain can achieve as large as around $5\%$. Moreover, it should be noted that the proposed matrix can still present comparable performance with others (usually inferior to the best results not more than $1\%$), even as $k$ is smaller than the lower threshold described above. This implies that regardless of the value of $k$, the proposed matrix is always valuable due to its  lower  complexity and competitive performance.  In short, the extensive experiments on real data sufficiently verifies the performance advantage of the theoretically proposed random  matrix, as well as the conjecture that the performance advantage holds only when the projection dimension $k$ is large enough. 



\section{Conclusion and Discussion}
\label{sec-conclusion}

 This paper has proved that random projection can achieve its  best feature selection performance,   when only one feature element of high-dimensional data is considered at each sampling. In practice,  however, the number of  feature elements is usually unknown, and so the aforementioned best sampling process is hard to be implemented. Based on the principle of achieving the best sampling process with high probability, we practically propose a class of sparse random matrices with exactly one nonzero element per column, which is expected to outperform other more dense random projection matrices, if the projection dimension is not much smaller than the number of feature elements. Recall that for the possibility of theoretical analysis, we have typically assumed that the elements of high-dimensional data are mutually independent, which obviously cannot be well satisfied by the real data, especially the redundant elements. Although  the impact of  redundant elements is reasonably avoided in our analysis, we cannot ensure that all analyzed  feature elements are exactly independent in practice. This defect might affect the applicability of our theoretical proposal to some extent, whereas empirically the negative impact seems to be negligible, as proved by the experiments on synthetic data. In order to validate the feasibility of the theoretical proposal,  extensive  classification experiments are conducted on various real data, including face image, DNA microarray and text document. As it is expected, the proposed random matrix  shows better performance than other more dense matrices,  as the projection dimension is sufficiently large; otherwise, it  presents comparable performance with others. This result  suggests that for random projection applied to the task of classification, the proposed currently sparsest random matrix is much more attractive  than  other more dense random  matrices in terms of   both complexity and performance.

\section*{Appendix A.}
\label{apd-a}
\noindent{\bf Proof of Lemma \ref{lemma-3}}
\begin{proof}
\noindent
Due to the sparsity of $\mathbf{r}$ and the symmetric property of both $r_{j}$ and $z_j$, the function $f(\mathbf{r},\mathbf{z})$ can be equivalently transformed to a  simpler form, that is $f(x)=\mu\sqrt{\frac{d}{s}}|\sum_{i=1}^{i=s}x_i|$ with $x_i$ being $\pm1$ equiprobably. With the simplified form, three  results of this lemma are sequentially proved below.
\begin{itemize}
  \item[1)] First, it can be easily derived that
$$
  \mathds{E}(f(x))=\mu\sqrt{\frac{d}{s}}\frac{1}{2^s}\sum_{i=1}^{s}(C_s^i|s-2i|)
$$ then the solution to $\mathds{E}(f(x))$  turns to calculating  $\sum_{i=0}^{s}(C_s^i|s-2i|)$, which can be deduced as

$$
\sum_{i=0}^{s}(C_s^i|s-2i|)=\left\{
\begin{array}{cl}
2sC_{s-1}^{\frac{s}{2}-1}& if~ s ~is ~even\\[5pt]
2sC_{s-1}^{\frac{s-1}{2}}& if~ s ~is~ odd\\
\end{array}
\right.
$$ by summing the piecewise function

$$
C_s^i|s-2i|=\left\{
\begin{array}{ll}
sC_{s-1}^{0}& if~ i=0 \\[6pt]
sC_{s-1}^{s-i-1}-sC_{s-1}^{i-1}& if~ 1\leq i \leq \frac{s}{2}\\[6pt]
sC_{s-1}^{i-1}-sC_{s-1}^{s-i-1}& if~ \frac{s}{2}< i< s \\[6pt]
sC_{s-1}^{s-1}& if~ i=s\\
\end{array}
\right.
$$
Further, with $C_{s-1}^{i-1}=\frac{i}{s}C_s^i$, it can be deduced that
$$
\sum_{i=0}^{s}(C_s^i|s-2i|)=2\lceil \frac{s}{2}\rceil C_s^{\lceil \frac{s}{2}\rceil}
$$
Then the fist result is  obtained as

$$
 \mathds{E}(f)=2\mu\sqrt{\frac{d}{s}}\frac{1}{2^s}\lceil \frac{s}{2}\rceil C_s^{\lceil \frac{s}{2}\rceil}
$$
  \item[2)]  Following the proof above, it is clear that  $ \mathds{E}(f(x))|_{s=1}=f(x)|_{s=1}=\mu\sqrt{d}$. As for $ \mathds{E}(f(x))|_{s>1}$, it is evaluated under two  cases:
  \begin{itemize}
  \item if $s$ is odd,$$
  \frac{\mathds{E}(f(x))|_{s}}{\mathds{E}(f(x))|_{s-2}}=\frac{\frac{2}{\sqrt{s}}\frac{1}{2^s} \frac{s+1}{2} C_s^{ \frac{s+1}{2}}}{\frac{2}{\sqrt{s-2}}\frac{1}{2^{s-2}} \frac{s-1}{2} C_{s-2}^{ \frac{s-1}{2}}}=\frac{\sqrt{s(s-2)}}{s-1}<1
  $$ namely, $\mathds{E}(f(x))$ decreases monotonically with respect to $s$. Clearly, in this case $\mathds{E}(f(x))|_{s=1}> \mathds{E}(f(x))|_{s>1}$;
  \item  if $s$ is even, $$
    \frac{\mathds{E}(f(x))|_{s}}{\mathds{E}(f(x))|_{s-1}}=\frac{\frac{2}{\sqrt{s}}\frac{1}{2^s} \frac{s}{2} C_s^{ \frac{s}{2}}}{\frac{2}{\sqrt{s-1}}\frac{1}{2^{s-1}} \frac{s}{2} C_{s-1}^{ \frac{s}{2}}}=\sqrt{\frac{s-1}{s}}<1
    $$ which means $\mathds{E}(f(x))|_{s=1}> \mathds{E}(f(x))|_{s>1}$, since $s-1$ is odd number for which $\mathds{E}(f(x))$  monotonically decreases.
  \end{itemize}
  Therefore the proof of the second result is completed.
  \item[3)] The proof of the third result is developed by employing  Stirling's approximation \cite{Bruijn81}$$
  s!=\sqrt{2\pi s}(\frac{s}{e})^se^{\lambda_s},~~~ 1/(12s+1)<\lambda_s<1/(12s).
  $$
  Precisely, with the formula of $\mathds{E}(f(x))$, it  can be deduced that
  \begin{itemize}
  \item if $s$ is even, $$
  \mathds{E}(f(x))=\mu\sqrt{ds}\frac{1}{2^s}\frac{s!}{\frac{s}{2}!\frac{s}{2}!}=\mu\sqrt{\frac{2d}{\pi}}e^{\lambda_s-2\lambda_{\frac{s}{2}}}
  $$
  \item if $s$ is odd, $$
\mathds{E}(f(x))=\mu\sqrt{d}\frac{s+1}{\sqrt{s}}\frac{1}{2^s}\frac{s!}{\frac{s+1}{2}!\frac{s-1}{2}!}=\mu\sqrt{\frac{2d}{\pi}}(\frac{s^2}{s^2-1})^{\frac{s}{2}}e^{\lambda_s-\lambda_{\frac{s+1}{2}}-\lambda_{\frac{s-1}{2}}}
  $$
  \end{itemize}
  Clearly $\mathop{\lim}\limits_{s\rightarrow \infty }\frac{1}{\sqrt{d}}\mathds{E}(f(x))\rightarrow \mu\sqrt{\frac{2}{\pi}}$  holds, whenever $s$ is even or odd.
\end{itemize}
\end{proof}
\section*{Appendix B.}
\label{apd-b}
\noindent{\bf Proof of Lemma \ref{lemma-4}}

\noindent
\begin{proof}
Due to the sparsity of $\mathbf{r}$ and the symmetric property of both $r_{j}$ and $z_{j}$, it is easy to derive that $f(\mathbf{r},\mathbf{z})=|\langle \mathbf{r}, \mathbf{z}\rangle|=\sqrt{\frac{d}{s}}|\sum_{j=1}^{s}z_j|$. This simplified formula will be studied in the following proof.  To present a readable proof, we first review the distribution shown in formula \eqref{eq-zi}
$$
z_j\sim\left\{
\begin{array}{lr}
N(\mu,\sigma)  & \text{with probability}~ 1/2\\
N(-\mu,\sigma)  & \text{ with probability} ~1/2\\
\end{array}
\right.
$$ where for $x\in N(\mu,\sigma)$, $\text{Pr}(x>0)=1-\epsilon$, $\epsilon=\Phi(-\frac{\mu}{\sigma})$ is a tiny positive number.  For notational simplicity, the subscript of  random variable $z_j$ is dropped in the following proof.
To ease the proof of the lemma, we first need to derive the expected value of $|x|$ with $x\sim N(\mu,\sigma^2)$:

$$
\begin{aligned}
\mathds{E}(|x|)&=\int_{-\infty}^{\infty}\frac{|x|}{\sqrt{2\pi}\sigma}e^{\frac{-(x-\mu)^2}{2\sigma^2}}dx\\
&=\int_{-\infty}^{0}\frac{-x}{\sqrt{2\pi}\sigma}e^{\frac{-(x-\mu)^2}{2\sigma^2}}dx+\int_{0}^{\infty}\frac{x}{\sqrt{2\pi}\sigma}e^{\frac{-(x-\mu)^2}{2\sigma^2}}dx\\
&=-\int_{-\infty}^{0}\frac{x-\mu}{\sqrt{2\pi}\sigma}e^{\frac{-(x-\mu)^2}{2\sigma^2}}dx+\int_{0}^{\infty}\frac{x-\mu}{\sqrt{2\pi}\sigma}e^{\frac{-(x-\mu)^2}{2\sigma^2}}dx\\
&+\mu\int_{0}^{\infty}\frac{1}{\sqrt{2\pi}\sigma}e^{\frac{-(x-\mu)^2}{2\sigma^2}}dx-\mu\int_{-\infty}^{0}\frac{1}{\sqrt{2\pi}\sigma}e^{\frac{-(x-\mu)^2}{2\sigma^2}}dx\\
&=\frac{\sigma}{\sqrt{2\pi}}e^{-\frac{(x-\mu)^2}{2\sigma^2}}|_{-\infty}^0-\frac{\sigma}{\sqrt{2\pi}}e^{-\frac{(x-\mu)^2}{2\sigma^2}}|^{\infty}_0+\mu \text{Pr}(x>0)-\mu \text{Pr}(x<0)\\
&=\sqrt{\frac{2}{\pi}}\sigma e^{-\frac{\mu^2}{2\sigma^2}}+\mu(1-2\text{Pr}(x<0))\\
&=\sqrt{\frac{2}{\pi}}\sigma e^{-\frac{\mu^2}{2\sigma^2}}+\mu(1-2\Phi(-\frac{\mu}{\sigma}))
\end{aligned}
$$
which will be used  many a time in the following proof. Then the proof of this lemma is separated into two parts as follows.
\begin{itemize}
\item[1)] This part presents the expected value of $f(r_i,z)$ for the cases $s=1$ and $s>1$.
\begin{itemize}
\item  if $s=1$, $f(\mathbf{r},\mathbf{z})=\sqrt{d}|z|$; with the  the probability density function of $z$:
$$p(z)=\frac{1}{2}\frac{1}{\sqrt{2\pi}\sigma}e^{\frac{-(z-\mu)^2}{2\sigma^2}}+\frac{1}{2}\frac{1}{\sqrt{2\pi}\sigma}e^{\frac{-(z+\mu)^2}{2\sigma^2}}$$
one can derive that
$$
\begin{aligned}
\mathds{E}(|z|)&=\int_{-\infty}^{\infty}|z|p(z)d_{z}\\
&=\frac{1}{2}\int_{-\infty}^{\infty}\frac{|z|}{\sqrt{2\pi}\sigma}e^{\frac{-(z-\mu)^2}{2\sigma^2}}dz+\frac{1}{2}\int_{-\infty}^{\infty}\frac{|z|}{\sqrt{2\pi}\sigma}e^{\frac{-(z+\mu)^2}{2\sigma^2}}dz\\
\end{aligned}
$$
with the previous result on $\mathds{E}(|x|)$, it is further deduced that  $$\mathds{E}(|z|)=\sqrt{\frac{2}{\pi}}\sigma e^{-\frac{\mu^2}{2\sigma^2}}+\mu(1-2\Phi(-\frac{\mu}{\sigma}))$$
Recall that $\Phi(-\frac{\mu}{\sigma})=\epsilon$, so
$$\mathds{E}(f)=\sqrt{d}\mathds{E}(|z|)=\sqrt{\frac{2d}{\pi}}\sigma_\mu e^{-\frac{\mu^2}{2\sigma^2}}+\mu\sqrt{d}(1-2\Phi(-\frac{\mu}{\sigma}))\approx \mu \sqrt{d} $$ if $\epsilon$ is tiny enough as illustrated in formula \eqref{eq-zi}.

\item if $s>1$, $f(\mathbf{r},\mathbf{z})=\sqrt{\frac{d}{s}}|\sum_{j=1}^{s}z|$;  let $t=\sum_{j=1}^{s}z$, then according to the symmetric distribution of $z$, $t$ holds $s+1$ different distributions:

$$
t\sim N((s-2i)\mu, s\sigma^2)~ \text{with probability} ~\frac{1}{2^s}C_s^i
$$ where ~$0\leq i\leq s$ denotes the number of $z$ drawn from $N(-\mu,\sigma^2)$. Then the PDF of $t$ can be described as
$$
p(t)=\frac{1}{2^s}\sum_{i=0}^sC_s^i\frac{1}{\sqrt{2\pi s}\sigma}e^{\frac{-(t-(s-2i)\mu)^2}{2s\sigma^2}}
$$
then,

$$
\begin{aligned}
\mathds{E}(|t|)&=\int_{-\infty}^{\infty}|t|p(t)dt\\&=\frac{1}{2^s}\sum_{i=0}^sC_s^i\int_{-\infty}^{\infty}|t|\frac{1}{\sqrt{2\pi s}\sigma}e^{\frac{-(t-(s-2i)\mu)^2}{2s\sigma^2}}dt\\
&=\frac{1}{2^s}\sum_{i=0}^sC_s^i\{\sqrt{\frac{2s}{\pi}}\sigma e^{\frac{-(s-2i)^2\mu^2}{2s\sigma^2}}+\mu|s-2i|[1-2\Phi(\frac{-|s-2i|\mu}{\sqrt{s}\sigma})]\}
\end{aligned}
$$
subsequently, the expected value of $f(r_i,z) $ can be expressed as
$$
\begin{aligned}
\mathds{E}(f)&=\mu\sqrt{\frac{d}{s}}\frac{1}{2^s}\sum_{i=0}^s(C_s^i|s-2i|)+\sigma\sqrt{\frac{2d}{\pi}}\frac{1}{2^s}\sum_{i=0}^sC_s^ie^{\frac{-(s-2i)^2\mu^2}{2s\sigma^2}}\\
&-2\mu\sqrt{\frac{d}{s}}\frac{1}{2^s}\sum_{i=0}^s[C_s^i|s-2i|\Phi(\frac{-|s-2i|\mu}{\sqrt{s}\sigma})]
\end{aligned}
$$
\end{itemize}
\item[2)] This part derives the upper bound of the aforementioned $\mathds{E}(f)|_{s>1}$. For simpler expression, the  three factors of above expression for $\mathds{E}(f)|_{s>1}$ are sequentially represented by $f_1$, $f_2$ and $f_3$, and then are analyzed, respectively.
    \begin{itemize}
    \item for $f_1=\mu\sqrt{\frac{d}{s}}\frac{1}{2^s}\sum_{i=0}^s(C_s^i|s-2i|)$, it can be rewritten as
    $$
    f_1=2\mu\sqrt{\frac{d}{s}}\frac{1}{2^s}C_s^{\lceil\frac{s}{2}\rceil}\lceil\frac{s}{2}\rceil
    $$
    \item for $f_2=\sigma\sqrt{\frac{2d}{\pi}}\frac{1}{2^s}\sum_{i=0}^sC_s^ie^{\frac{-(s-2i)^2\mu^2}{2s\sigma^2}}$, first,  we  can bound
        $$
       \left\{
        \begin{array}{ll}
        e^{\frac{-(s-2i)^2\mu^2}{2s\sigma^2}} <\text{exp}(-\frac{\mu^2}{\sigma^2}) & \text{if}~ i<\alpha ~\text{or}~ i>\alpha\\[6pt]
       e^{\frac{-(s-2i)^2\mu^2}{2s\sigma^2}}  \leq 1 & \text{if} ~\alpha\leq i\leq s-\alpha\\
        \end{array}
        \right.
        $$ where $\alpha=\lceil\frac{s-\sqrt{s}}{2}\rceil$. Take it into $f_2$,
        $$
        \begin{aligned}
        f_2&<\sigma\sqrt{\frac{2d}{\pi}}\frac{1}{2^s}\sum_{i=0}^{\alpha-1}C_s^ie^{\frac{-\mu^2}{\sigma^2}}+\sigma\sqrt{\frac{2d}{\pi}}\frac{1}{2^s}\sum_{i=s-\alpha+1}^{s}C_s^ie^{\frac{-\mu^2}{2\sigma^2}}
        +\sigma\sqrt{\frac{2d}{\pi}}\frac{1}{2^s}\sum_{i=\alpha}^{s-\alpha}C_s^i\\
        &<\sigma\sqrt{\frac{2d}{\pi}}e^{\frac{-\mu^2}{\sigma^2}}+\sigma\sqrt{\frac{2d}{\pi}}\frac{1}{2^s}\sum_{i=\alpha}^{s-\alpha}C_s^i
        \end{aligned}
        $$
        Since $C_s^i\leq C_s^{\lceil s/2\rceil}$,
        $$
        \begin{aligned}
        f_2&<\sigma\sqrt{\frac{2d}{\pi}}e^{\frac{-\mu^2}{\sigma^2}}+\sigma\sqrt{\frac{2d}{\pi}}\frac{1}{2^s}(\lfloor \sqrt{s}\rfloor+1) C_s^{\lceil s/2\rceil}\\
        &\leq \sigma\sqrt{\frac{2d}{\pi}}e^{\frac{-\mu^2}{\sigma^2}}+\sigma\sqrt{\frac{2d}{\pi}}\frac{1}{2^s} \sqrt{s} C_s^{\lceil s/2\rceil}+\sigma\sqrt{\frac{2d}{\pi}}\frac{1}{2^s}  C_s^{\lceil s/2\rceil}\\
        &\leq \sigma\sqrt{\frac{2d}{\pi}}e^{\frac{-\mu^2}{\sigma^2}}+\sigma\sqrt{\frac{2d}{\pi}}\frac{1}{2^s}\frac{2} {\sqrt{s}} C_s^{\lceil s/2\rceil}{\lceil \frac{s}{2}\rceil}+\sigma\sqrt{\frac{2d}{\pi}}\frac{1}{2^s}  C_s^{\lceil s/2\rceil}
        \end{aligned}
        $$
        with Stirling's approximation,
        $$
        f_2<\left\{
        \begin{array}{ll}
        \sqrt{\frac{2d}{\pi}}\sigma e^{\frac{-\mu^2}{2\sigma^2}}+\sqrt{d}\frac{2}{\pi}\sigma e^{\lambda_s-2\lambda_{s/2}}+\sqrt{\frac{d}{s}}\frac{2}{\pi}\sigma e^{\lambda_s-2\lambda_{s/2}}&\text{if $s$ is even}\\[10pt]
         \sqrt{\frac{2d}{\pi}}\sigma e^{\frac{-\mu^2}{2\sigma^2}}+\sqrt{{d}}\frac{2\sigma}{\pi}(\frac{s^2}{s^2-1})^{\frac{s}{2}}e^{\lambda_s-\lambda_{\frac{s+1}{2}}-\lambda_{\frac{s-1}{2}}}\\
         +\sqrt{{d}}\frac{2\sigma}{\pi}\frac{\sqrt{s}}{s+1}(\frac{s^2}{s^2-1})^{\frac{s}{2}}e^{\lambda_s-\lambda_{\frac{s+1}{2}}-\lambda_{\frac{s-1}{2}}}&\text{if $s$ is odd}
        \end{array}
        \right.
        $$
        \item for $f_3=-2\mu\sqrt{\frac{d}{s}}\frac{1}{2^s}\sum_{i=0}^s[C_s^i|s-2i|\Phi(\frac{-|s-2i|\mu}{\sqrt{s}\sigma})]$, with the previous defined $\alpha$,
        $$
        \begin{aligned}
        f_3&\leq -2\mu\sqrt{\frac{d}{s}}\frac{1}{2^s}\sum_{i=\alpha}^{s-\alpha}[C_s^i|s-2i|\Phi(\frac{-|s-2i|\mu}{\sqrt{s}\sigma})]\\
        &\leq  -2\mu\sqrt{\frac{d}{s}}\frac{1}{2^s}\sum_{i=\alpha}^{s-\alpha}[C_s^i|s-2i|\Phi(\frac{-\mu}{\sigma})]\\
        &=  -2\mu\epsilon\sqrt{\frac{d}{s}}\frac{1}{2^s}\sum_{i=\alpha}^{s-\alpha}[C_s^i|s-2i|]\\
        &= -2\mu\epsilon\sqrt{\frac{d}{s}}\frac{1}{2^s}(2sC_{s-1}^{\lceil{\frac{s}{2}-1}\rceil}-2sC_{s-1}^{\alpha-1})\\
        &= -4\mu\epsilon\sqrt{ds}\frac{1}{2^s}(C_{s-1}^{\lceil{\frac{s}{2}-1}\rceil}-C_{s-1}^{\alpha-1})\\
        &\leq 0
        \end{aligned}
        $$
       \end{itemize}
        finally, we can further deduce that
        $$
        \begin{aligned}
        &\mathds{E}(f)|_{s>1}=f_1+f_2+f_3\\[6pt]
         &<\left\{
         \begin{array}{ll}
        2\mu\frac{1}{2^s}\sqrt{\frac{d}{s}}C_s^{\lceil \frac{s}{2} \rceil} +\frac{2\sigma}{\pi}\sqrt{d}e^{\lambda_s-2\lambda_{\frac{s}{2}}}+\sqrt{\frac{2d}{\pi}}\sigma e^{\frac{-\mu^2}{2\sigma^2}}+\sqrt{\frac{d}{s}}\frac{2}{\pi}\sigma e^{\lambda_s-2\lambda_{s/2}}\\-4\mu\epsilon\sqrt{ds}\frac{1}{2^s}(C_{s-1}^{\lceil{\frac{s}{2}-1}\rceil}-C_{s-1}^{\alpha-1}) & \text{if $s$ is even}\\[12pt]
        2\mu\frac{1}{2^s}\sqrt{\frac{d}{s}}C_s^{\lceil \frac{s}{2} \rceil}+\frac{2\sigma}{\pi}\sqrt{d}\frac{s^2}{s^2-1}^{\frac{s}{2}}e^{\lambda_s-\lambda_{\frac{s+1}{2}}-\lambda_{\frac{s-1}{2}}}+\sqrt{\frac{2d}{\pi}}\sigma e^{\frac{-\mu^2}{2\sigma^2}}\\+\sqrt{{d}}\frac{2\sigma}{\pi}\frac{\sqrt{s}}{s+1}(\frac{s^2}{s^2-1})^{\frac{s}{2}}e^{\lambda_s-\lambda_{\frac{s+1}{2}}-\lambda_{\frac{s-1}{2}}}-4\mu\epsilon\sqrt{ds}\frac{1}{2^s}(C_{s-1}^{\lceil{\frac{s}{2}-1}\rceil}-C_{s-1}^{\alpha-1}) & \text{if $s$ is odd}
         \end{array}
        \right.\\[10pt]
        &=\left\{
        \begin{array}{ll}
        (\sqrt{\frac{2d}{\pi}}\mu+\frac{4\sigma}{\pi}\sqrt{d})e^{\lambda_s-2\lambda_{\frac{s}{2}}}+\sqrt{\frac{2d}{\pi}}\sigma e^{\frac{-\mu^2}{2\sigma^2}}+\sqrt{\frac{d}{s}}\frac{2}{\pi}\sigma e^{\lambda_s-2\lambda_{s/2}}\\-4\mu\epsilon\sqrt{ds}\frac{1}{2^s}(C_{s-1}^{\lceil{\frac{s}{2}-1}\rceil}-C_{s-1}^{\alpha-1}) & \text{if $s$ is even}\\[12pt]
        (\sqrt{\frac{2d}{\pi}}\mu+\frac{4\sigma}{\pi}\sqrt{d})(\frac{s^2}{s^2-1})^{\frac{s}{2}}e^{\lambda_s-\lambda_{\frac{s+1}{2}}-\lambda_{\frac{s-1}{2}}}+\sqrt{\frac{2d}{\pi}}\sigma e^{\frac{-\mu^2}{2\sigma^2}}\\+\sqrt{{d}}\frac{2\sigma}{\pi}\frac{\sqrt{s}}{s+1}(\frac{s^2}{s^2-1})^{\frac{s}{2}}e^{\lambda_s-\lambda_{\frac{s+1}{2}}-\lambda_{\frac{s-1}{2}}}-4\mu\epsilon\sqrt{ds}\frac{1}{2^s}(C_{s-1}^{\lceil{\frac{s}{2}-1}\rceil}-C_{s-1}^{\alpha-1}) & \text{if $s$ is odd}
        \end{array}
        \right.
        \end{aligned}
        $$
      \item[3)] This part discusses the condition for $$\mathds{E}(f)|_{s>1}<\mathds{E}(f)|_{s=1}=\sqrt{\frac{2d}{\pi}}\sigma e^{-\frac{\mu^2}{2\sigma^2}}+\mu\sqrt{d}(1-2\Phi(-\frac{\mu}{\sigma}))$$ by further relaxing the upper bound of $\mathds{E}(f)|_{s>1}$.
          \begin{itemize}
          \item  if $s$ is even, since $f_3\leq 0$,
          $$
          \begin{aligned}
          \mathds{E}(f)|_{s>1}&<(\sqrt{\frac{2d}{\pi}}\mu+\frac{2\sigma}{\pi}\sqrt{d})e^{\lambda_s-2\lambda_{\frac{s}{2}}}+\sqrt{\frac{d}{s}}\frac{2}{\pi}\sigma e^{\lambda_s-2\lambda_{s/2}}+\sqrt{\frac{2d}{\pi}}\sigma e^{\frac{-\mu^2}{2\sigma^2}}\\
          &\leq (\sqrt{\frac{2d}{\pi}}\mu+\frac{2\sigma}{\pi}\sqrt{d})+\sqrt{\frac{d}{s}}\frac{2}{\pi}\sigma +\sqrt{\frac{2d}{\pi}}\sigma e^{\frac{-\mu^2}{2\sigma^2}}\\
          &=\mu\sqrt{d}(\sqrt{\frac{2}{\pi}}+(1+\frac{1}{\sqrt{s}})\frac{2\sigma}{\pi\mu})+\sqrt{\frac{2d}{\pi}}\sigma e^{\frac{-\mu^2}{2\sigma^2}}
          \end{aligned}
          $$ Clearly $\mathds{E}(f)|_{s>1}<\mathds{E}(f)|_{s=1}$, if $\sqrt{\frac{2}{\pi}}+(1+\frac{1}{\sqrt{2}})\frac{2\sigma}{\pi\mu}\leq 1-2\Phi(-\frac{\mu}{\sigma})$. This condition is well satisfied when $\mu>>\sigma$, since $\Phi(-\frac{\mu}{\sigma})$ decreases monotonically with increasing $\mu/\sigma$.
          \item  if $s$ is odd, with $f_3\leq 0$,
            $$
          \begin{aligned}
          \mathds{E}(f)|_{s>1}&<(\sqrt{\frac{2d}{\pi}}\mu+\frac{2\sigma}{\pi}\sqrt{d})(\frac{s^2}{s^2-1})^{\frac{s}{2}}+\sqrt{{d}}\frac{2\sigma}{\pi}\frac{\sqrt{s}}{s+1}(\frac{s^2}{s^2-1})^{\frac{s}{2}}+\sqrt{\frac{2d}{\pi}}\sigma e^{\frac{-\mu^2}{2\sigma^2}}
          \end{aligned}
          $$  It can be proved that $(\frac{s^2}{s^2-1})^{\frac{s}{2}}$ decreases monotonically with respect to $s$. This yields that

          $$
          \begin{aligned}
          \mathds{E}(f)|_{s>1}<(\sqrt{\frac{2d}{\pi}}\mu+(1+\frac{\sqrt{3}}{4})\frac{2\sigma}{\pi}\sqrt{d})(\frac{3^2}{3^2-1})^{\frac{3}{2}}+\sqrt{\frac{2d}{\pi}}\sigma e^{\frac{-\mu^2}{2\sigma^2}}
          \end{aligned}
          $$ in this case $\mathds{E}(f)|_{s>1}<\mathds{E}(f)|_{s=1}$, if $(\frac{9}{8})^{\frac{3}{2}}(\sqrt{\frac{2}{\pi}}+(1+\frac{\sqrt{3}}{4})\frac{2\sigma}{\pi\mu})\leq 1-2\Phi(-\frac{\mu}{\sigma})$.
          \end{itemize}
            Summarizing above two cases for $s$ , finally
          $$
          \mathds{E}(f)|_{s>1}<\mathds{E}(f)|_{s=1},~ \text{if} ~ (\frac{9}{8})^{\frac{3}{2}}[\sqrt{\frac{2}{\pi}}+(1+\frac{\sqrt{3}}{4})\frac{2}{\pi}(\frac{\mu}{\sigma})^{-1}]+2\Phi(-\frac{\mu}{\sigma})\leq 1
          $$
\end{itemize}
\end{proof}


\section*{Appendix C.}
\label{apd-c}
\noindent{\bf Proof of Lemma \ref{lemma-5}}

\vspace{3pt}
\noindent
First, one can rewrite $f(\mathbf{r},\mathbf{z})=|\Sigma_{j=1}^{j=d}(r_{j}z_j)|=\mu |x|$, where $x\in N(0,d)$, since i.i.d $r_{j}\in N(0,1)$ and $z_j \in \{\pm \mu\}$ with equal probability. Then one can prove that
$$
\begin{aligned}
\mathds{E}(|x|)&=\int_{-\infty}^{0}\frac{-x}{\sqrt{2\pi d}}e^{-\frac{x^2}{2d}}dx+\int_{0}^{\infty}\frac{x}{\sqrt{2\pi d}}e^{-\frac{x^2}{2d}}dx\\&=2\int_0^{\infty}\frac{\sqrt{d}}{\sqrt{2\pi}}e^{-\frac{x^2}{2d}}d\frac{x^2}{2d}\\
&=2\sqrt{d}\int_0^{\infty}\frac{1}{\sqrt{2\pi}}e^{-\alpha}d\alpha\\
&=\sqrt{\frac{2d}{\pi}}
\end{aligned}
$$
Finally, it is derived that $\mathds{E}(f)=\mu\mathds{E}(|x|)=\mu\sqrt{\frac{2d}{\pi}}$.

\bibliographystyle{IEEEtran}
\vskip 0.2in
\bibliography{egbib}

\end{document}